\definecolor{Darkblue}{rgb}{0,0,0.4}
\definecolor{Brown}{cmyk}{0,0.81,1.,0.60}
\definecolor{Purple}{cmyk}{0.45,0.86,0,0}
\newcommand{\lref}[2][]{\hyperref[#2]{#1~\ref*{#2}}}
\newtheorem{theorem}{Theorem}[section]
\newtheorem{proposition}[theorem]{Proposition}
\newtheorem{lemma}[theorem]{Lemma}
\newtheorem{claim}[theorem]{Claim}
\numberwithin{algorithm}{section}
\newenvironment{aside}{

\begin{quote}
  \footnotesize
}{\end{quote}

}
\newcommand{\junk}[1]{}
\newcommand{\ignore}[1]{}
\newcommand{\R}[0]{{\ensuremath{\mathbb{R}}}}
\newcommand{\argmin}{\operatorname{argmin}}
\newcommand{\sse}{\subseteq}
\newcommand{\e}{\varepsilon}
\newcommand{\eps}{\varepsilon}
\newcommand{\ts}{\textstyle}
\newcounter{note}[section]
\renewcommand{\thenote}{\thesection.\arabic{note}}
\newcommand{\agnote}[1]{\refstepcounter{note}$\ll${\bf Anupam~\thenote:}
  {\sf \color{blue} #1}$\gg$\marginpar{\tiny\bf AG~\thenote}}
\newcommand{\alert}[1]{{\color{red}#1}}
\newcommand{\balert}[1]{{\color{blue}#1}}
\newcommand{\qedsymb}{\hfill{\rule{2mm}{2mm}}}
\newcommand{\initOneLiners}{%
    \setlength{\itemsep}{0pt}
    \setlength{\parsep }{0pt}
    \setlength{\topsep }{0pt}
}
\newcommand{\squishlist}{
 \begin{list}{$\bullet$}
  { \setlength{\itemsep}{0pt}
     \setlength{\parsep}{3pt}
     \setlength{\topsep}{3pt}
     \setlength{\partopsep}{0pt}
     \setlength{\leftmargin}{1.5em}
     \setlength{\labelwidth}{1em}
     \setlength{\labelsep}{0.5em} } }
\newcommand{\squishend}{
  \end{list}  }
\newcommand{\gr}{\nabla}
\newcommand{\ip}[1]{\langle #1 \rangle}
\newcommand{\braket}[2]{\langle #1, #2 \rangle}
\DeclarePairedDelimiterX{\infdivx}[2]{(}{)}{%
  #1\;\delimsize\|\;#2%
}
\newcommand{\hdiv}{D_h\infdivx}
\newcommand{\KLdiv}{KL\infdivx}
\DeclarePairedDelimiter{\norm}{\lVert}{\rVert}
\newcommand*\widefbox[1]{\fbox{\hspace{2em}#1\hspace{2em}}}
\begin{document}

\title{{\bf Potential-Function Proofs for First-Order
    Methods}\thanks{This work was done in part while the authors were
    visiting the \emph{Algorithms and Uncertainty} and \emph{Bridging
      Discrete and Continuous Optimization} programs at the Simons
    Institute for the Theory of Computing. It was partially supported by
    the DIMACS/Simons Collaboration on Bridging Continuous and Discrete
    Optimization through NSF grant \#CCF-1740425.} }

\author{Nikhil Bansal\thanks{Department of Mathematics and Computer
    Science, Eindhoven University of Technology, Netherlands. Supported
    in part by NWO Vidi grant 639.022.211 and ERC consolidator grant
    617951.} \and Anupam Gupta\thanks{Computer Science Department,
    Carnegie Mellon University, Pittsburgh, PA 15213. Supported in part
    by NSF awards CCF-1536002, CCF-1540541, and CCF-1617790.}}

\maketitle

\begin{abstract}
  This technical note discusses proofs of convergence for first-order
  methods based on simple potential-function arguments. We cover methods
  like gradient descent (for both smooth and non-smooth settings),
  mirror descent, and some accelerated variants. We hope the structure
  and presentation of these amortized-analysis proofs will be useful as
  a guiding principle in  learning and using these proofs.
\end{abstract}

\section{Introduction}

The so-called ``gradient descent'' framework is a class of iterative
methods for solving convex minimization problems --- indeed, since the
gradient gives the direction of steepest increase in function value, a
natural approach to minimize the convex function is to move in the
direction opposite to the gradient. Variants of this general versatile
approach have been central to convex optimization for many years. In
recent years, with the increased use of continuous methods in discrete
optimization, and with the gap between continuous and discrete
optimization being smaller than ever, this technique has also become
central for algorithm design in general.

In this note we give convergence arguments for many commonly studied
versions of first-order methods using simple \emph{potential-function}
arguments.  We find that presenting the proofs in the amortized-analysis
framework is useful as a guiding principle, since it imparts a clear
structure and direction to proofs. We hope others will also find this
perspective useful, both in learning and teaching these techniques and
proofs, and also in extending them to other domains.

A disclaimer: previously existing proofs for gradient methods are usually not 
difficult, and their individual components are not substantially different from the 
ones in this note. However, using an explicit potential to guide our
proofs makes them arguably more intuitive. In fact, the intuition of 
viewing these gradient methods as trying to control a potential function
is also known to the specialists; e.g., see the text of Nemirovski and
Yudin~\cite[pp.~85--88]{NY} for a continuous perspective via Lyapunov
functions. This is more explicit in recent
papers~\cite{SBC,Wibi,KBB15,WilsonRJ16,DO17} relating continuous and
discrete updates to understand the acceleration phoenomenon. E.g.,
Krichene et al.~\cite{KBB15} give the potential function we use in
\S\ref{sec:nest-proof}. However, these potential function proofs
and intuitions have not yet permeated into the commonly presented
expositions. The current note is an attempt to make such ideas more widely known.


\paragraph{Basic Definitions.} 
Recall that a set $K \sse \R^d$ is \emph{convex} if for all $x,y \in K$,
the \emph{convex combination} $\lambda x + (1-\lambda)y \in K$ for all
$\lambda \in [0,1]$.  A function $f: \R^d \to R$  is \emph{convex} over a
convex set $K$ if
\[ f(\lambda x + (1-\lambda)y) \leq \lambda\, f(x) + (1-\lambda)\, f(y)
  \qquad \forall x, y \in K, \forall \lambda \in [0,1]. \] This is
called the \emph{zeroth-order} definition. There are other equivalent
notions: if the function is differentiable, the \emph{first-order}
definition is that $f$ is convex over $K$ if
\begin{gather}
  f(y) \geq f(x) + \ip{ \gr f(x), y-x} \qquad \forall x,y \in K. \label{eq:conv-def}
\end{gather}
(The \emph{second-order} definition says that a twice-differentiable $f$
is convex if its Hessian matrix $\gr^2 f$ is positive-semidefinite.)
For this note, we assume our convex sets $K$ are closed, and the convex functions $f$ are
differentiable. 
However, the proofs extend to non-differentiable
functions in the natural way, using subgradients. (See, e.g.,~\cite{HUL}
for more definitions and background on convexity and subgradients.) 

\paragraph{The Problems.} Given a convex function $f: \R^d \to R$, and an error parameter $\e$, the
\emph{(unconstrained) convex minimization} problem is to find a point
$\widehat{x}$ such that
$f(\widehat{x}) - \min_{x \in \R^d} f(x) \leq \e$.  In the
\emph{constrained} version of the problem, we are also given a convex
set $K$, and the goal is to find a point
$\widehat{x} \in K$ which has \emph{error}
$f(\widehat{x}) - \min_{x \in K} f(x) \leq \e$. In either case, let
$x^*$ denote the minimizer for $f(\cdot)$. We will be interested in bounding the number 
of gradient queries required to converge to the approximate minimizer $\widehat{x}$, as function of the distance between $x_0$ and $x^*$ and some parameters of the function $f$, depending on the particular variant of gradient descent.

In \emph{online convex optimization} over a convex set $K$, at each
timestep $t = 1,2, \cdots$, the algorithm outputs a point $x_t \in K$
and an adversary produces a convex function $f_t$. The algorithm's
\emph{loss} at timestep $t$ is defined to be $f_t(x_t)$. Now the
\emph{regret} of the algorithm is
$\sum_{t = 1}^T f_t(x_t) - \min_{x \in K} \sum_{t=1}^T f_t(x)$, and the
goal is to determine the points $x_t$ online (without the knowledge of
the current or future functions $\{f_s\}_{s \geq t}$) to minimize the
regret.  Note that this generalizes the convex optimization setting
above, which corresponds to setting $f_t=f$ at each time $t$, and that
any algorithm with sublinear regret $o(T)$ can be used to find an
approximate optimum $\widehat{x}$ up to any desired accuracy $\e$.

\paragraph{Assumptions:} We assume that our convex functions are closed,
convex, and differentiable, and that the convex sets $K$ are also closed
with non-empty interior.  We assume access to a gradient oracle: i.e.,
given any point $x$, we can get the gradient $\gr f(x)$ of the function
$f$ at any point $x$. We only work with the Euclidean norm
$\norm{\cdot}_2$ for the first few sections; general norms are discussed
in \S\ref{sec:MD}.

\paragraph{References:}
In this technical survey, we focus only on the exposition of the proofs.
We omit most citations, and also discussion of the ``bigger picture''.

There are many excellent sources for other proofs of these results, with
comprehensive bibliographies; e.g., see the authoritative notes by
Nesterov~\cite{Nest-book}, Ben-Tal and Nemirovski~\cite{BTN}, the
monographs of Bubeck~\cite{Seb} and Shalev-Shwartz~\cite{SSS}, the
textbooks by Cesa-Bianchi and Lugosi~\cite{CBL}, Hazan~\cite{Hazan}, and
lecture notes by Duchi~\cite{Duchi} and Vishnoi~\cite{Vishnoi}. 

There are several other perspectives on these proofs that the reader may
find useful. One useful perspective is that of viewing first-order
methods as discretizations of suitable continuous dynamics; this appears
even in the classic work of Nemirovski and Yudin~\cite{NY}, and has been
widely used recently (see,
e.g.,~\cite{SBC,Wibi,KBB15,WilsonRJ16}). Another useful perspective is
exhibit a ``dual'' lower bound on the optimal value via the convex
conjugate, and use the duality gap to bound the error (see,
e.g.,~\cite{DO17,Pena17}). We point the interested reader to the
respective papers for more details.

Finally, we discuss some concurrent and related work. Independently of
our work, Karimi and Vavasis~\cite{KarimiV17} give potential-based convergence
proofs for conjugate gradient and accelerated methods; their potentials
are similar to ours. And following up on a preprint of our results,
Taylor and Bach~\cite{TB19} analyze stochastic first-order methods using
potential functions.

\subsection{Results and Organization}

All of the proofs use the same general potential: for some fixed point
$x^*$ (which can be thought of as the optimal or reference point) we have
\begin{gather}
  \Phi_t = a_t\cdot (f(x_t) - f(x^*)) + b_t \cdot (\text{distance from
    $x_t$ to $x^*$}). \label{eq:meta}
\end{gather}
Here $a_t, b_t$ are non-negative, and naturally, different proofs use slightly different choice of these multipliers, and even the distance functions may vary. However, the
general approach remains the same: we show that
$\Phi_{t+1} - \Phi_t \leq B_t$ (where $B_t$ is often zero). Since the
potential and distance terms remain non-negative, the telescoping sum
gives
\[ \Phi_T \leq \Phi_0 + \sum_{t =0}^{T-1} B_t \quad\implies\quad f(x_T) - f(x^*) \leq \frac{\Phi_0 +
    \sum_t B_t}{a_T}. \]

We begin in \S\ref{sec:online} with proofs of the basic
(projected) gradient descent, for general and strongly convex functions;
these even work in the online regret-minimization setting where the
function may change at each timestep. Here the
analysis is more along the lines of \emph{amortized-analysis}: we show
that the amortized cost, namely the cost of the algorithm plus the
increase in potential is at most the optimal cost (plus $B$). I.e.,
$f_t(x_t) + (\Phi_{t+1} - \Phi_t) \leq f_t(x^*) + B$. This telescopes to
imply that the average regret is
$\frac1T( \sum_{t = 1}^T (f_t(x_t) - f_t(x^*))) \leq B + \Phi_0/T$. The
potential here is very simple: we set $a_t = 0$ and just use the
distance of the current point $x_t$ to the optimal point $x^*$
(according to the ``right'' distance). E.g., for basic gradient descent,
the potential is just a scaled version of $\norm{x_t - x^*}^2$.

Next, we give proofs of convergence for the case of \emph{smooth} convex functions
in \S\ref{sec:smooth}. In the simplest case we just set $b_t = 0$
and use $a_t = t$ in~(\ref{eq:meta}) to prove $B_t \approx 1/t$. This gives an error of
$\approx (\log T)/T$, which is in the right ballpark. (This can be
optimized using better settings of the multipliers.)  The proofs for
projected smooth gradient descent, gradient descent for well-conditioned
functions, and the Frank-Wolfe method, all follow this template. For
these proofs, we now use the ``value-based'' terms in~(\ref{eq:meta}),
i.e., the terms that depend on $f(x_t) - f(x^*)$.

We then extend our understanding to \emph{mirror descent}. This is a
substantial generalization of gradient descent to general norms. While
the language necessarily becomes more technical (relying on dual norms
and Bregman divergences), the ideas remain clean. Indeed, the structure
of the potential-based proofs from \S\ref{sec:online} remains
essentially the same as for basic gradient descent; the potential is now
based on a Bregman diverence, a natural generalization of the squared
distance. These proofs appear in \S\ref{sec:MD}.

An orthogonal extension is to potential-based proofs of Nesterov's
accelerated gradient descent method for smooth and well-conditioned
convex functions.  The ideas in this section build on the simple
calculations we would have  seen in \S\ref{sec:online}
and \S\ref{sec:smooth}.  We can now use the full power of both
distance-based and value-based terms in the potential
function~(\ref{eq:meta}), trading them off against each other.
Moreover, in \S\ref{sec:nest-failed} we show how the basic
analysis for smooth convex functions from \S\ref{sec:smooth}
directly suggests how to obtain the accelerated algorithm by coupling
together one cautious and one aggressive gradient descent step.

\paragraph{Organization.} The paper follows the above outline. We start
with proofs for general convex functions in \S\ref{sec:online} using
simple distance-based potentials, then proceed to smooth and
well-conditioned convex functions in \S\ref{sec:smooth} using more
sophisticated potentials. We then discuss the generalization to mirror
descent via Bregman diverences in \S\ref{sec:MD}. Finally, we give
proofs for accelerated versions in \S\ref{sec:nesterov}. Note that
\S\ref{sec:MD} and \S\ref{sec:nesterov} are independent, and may be read
in any order.

\section{Online Analyses}
\label{sec:online}

\subsection{Basic Gradient Descent}
\label{sec:basic-GD}

The basic analysis works even for the online convex optimization case:
at each step we are given a function $f_t$, we play $x_t$, and want to
minimize the regret. In this case the update rule is:
\begin{gather}
  \boxed{x_{t+1} \gets x_{t} - \eta_t \, \gr f_t(x_t)}  \label{eq:gd-basic}
\end{gather}
An equivalent form for this update, that is easily verified by taking derivatives with respect to $x$, is:
\begin{gather}
  \boxed{x_{t+1} \gets \arg\min_x \Big\{ \frac12 \| x - x_{t} \|^2 +  \eta_t
    \ip{x, \gr f_t(x_t)} \Big\} }  \label{eq:gd-proxform}
\end{gather}
Intuitively, we want to move in the direction of the negative gradient,
but don't want to move too far. 

\begin{theorem}[Basic Gradient Descent]
  \label{thm:gd-basic-regret} Let $f_1,\ldots,f_T: \R^n \to \R$ be $G$-Lipschitz
  functions, i.e.~$\norm{\gr f_t(x)} \leq G$ for all
  $x, t$. Then starting at point $x_0 \in \R^n$
  and using updates~(\ref{eq:gd-basic}) with step size
  $\eta_t = \eta = \frac{D}{G\sqrt{T}}$ for $T$ steps guarantees an
  average regret of
  \[ \frac1T \sum_{t = 0}^{T-1} \big( f_t(x_t) - f_t(x^*) \big) \leq \eta
    \frac{G^2}{2} + \frac{D^2}{2\eta T} \leq \frac{DG}{\sqrt{T}},  \]
  for all $x^*$ with $ \norm{
    x_0 - x^*} \leq D$.

\end{theorem}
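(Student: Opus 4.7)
The plan is to instantiate the meta-potential~(\ref{eq:meta}) with $a_t = 0$ and a scaled squared-distance term, namely
\[ \Phi_t \;=\; \tfrac{1}{2\eta}\,\norm{x_t - x^*}^2, \]
and to show that the per-step amortized regret $\big(f_t(x_t) - f_t(x^*)\big) + \big(\Phi_{t+1} - \Phi_t\big)$ is bounded by $\tfrac{\eta G^2}{2}$. Telescoping this inequality and then optimizing $\eta$ will give both bounds in the theorem statement.

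First I would expand $\Phi_{t+1} - \Phi_t$ using the update~(\ref{eq:gd-basic}). Writing $g_t := \gr f_t(x_t)$, a direct calculation of $\norm{x_t - \eta g_t - x^*}^2 - \norm{x_t - x^*}^2$ yields
\[ \Phi_{t+1} - \Phi_t \;=\; -\ip{g_t,\, x_t - x^*} + \tfrac{\eta}{2}\norm{g_t}^2. \]
Second, I would use the two hypotheses: the convexity inequality~(\ref{eq:conv-def}) applied at $x=x_t$, $y=x^*$ gives $f_t(x_t) - f_t(x^*) \leq \ip{g_t, x_t - x^*}$, which cancels the first (bad) term above, while the Lipschitz bound gives $\norm{g_t}^2 \leq G^2$. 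Combining:
\[ \big(f_t(x_t) - f_t(x^*)\big) + \big(\Phi_{t+1} - \Phi_t\big) \;\leq\; \tfrac{\eta G^2}{2}. \]

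Third, I would telescope this inequality over $t = 0, \ldots, T-1$, using $\Phi_T \geq 0$ and $\Phi_0 = \tfrac{1}{2\eta}\norm{x_0 - x^*}^2 \leq \tfrac{D^2}{2\eta}$, to obtain
\[ \sum_{t=0}^{T-1} \big(f_t(x_t) - f_t(x^*)\big) \;\leq\; \tfrac{\eta T G^2}{2} + \tfrac{D^2}{2\eta}. \]
Dividing by $T$ yields the first stated inequality. The second inequality is then a routine AM-GM balance: the choice $\eta = D/(G\sqrt{T})$ equates the two terms and produces the bound $DG/\sqrt{T}$.

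I do not anticipate a serious obstacle here: the calculation is the prototype for the whole note, and each of the three ingredients (the expansion of the squared distance, the first-order convexity inequality, and the gradient-norm bound) plays exactly the role advertised in the introduction. The only slightly delicate point is the sign bookkeeping when expanding $\Phi_{t+1} - \Phi_t$, so that the linear inner-product term comes out with the right sign to cancel against the convexity inequality; this is where the scaling of the potential by $\tfrac{1}{\eta}$ (rather than just $\tfrac{1}{2}$) pays off by making the amortized inequality read cleanly.
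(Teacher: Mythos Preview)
Your proposal is correct and is essentially identical to the paper's proof: same potential $\Phi_t = \tfrac{1}{2\eta}\norm{x_t - x^*}^2$, same expansion of $\Phi_{t+1}-\Phi_t$ via the update rule, same use of convexity to cancel the inner-product term and of the Lipschitz bound to control the squared-gradient term, and the same telescoping followed by the choice $\eta = D/(G\sqrt{T})$. The only differences are cosmetic (you write $g_t$ for the gradient and phrase the potential-change computation slightly more tersely).
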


\begin{proof}
  Consider the potential function
  \begin{gather}
    \boxed{\Phi_t = \frac{1}{2\eta}\norm{x_t - x^*}^2}  \label{eq:pot}
  \end{gather}
  which is positive for all $t$.  We show that, for some upper
  bound $B$,
  \begin{equation}
    \label{eq:potbound}
    f_t(x_t) - f_t(x^*) + \Phi_{t+1} - \Phi_t \leq B.
  \end{equation}
  Summing over all times $t$, the average regret is
  \begin{align}
    \frac1T \sum_{t=0}^{T-1} (f_t(x_t) - f_t(x^*))
    &\leq B + \frac1T (\Phi_0 - \Phi_T) \leq B + \frac{\Phi_0}{T} = B + \frac{D^2}{2\eta
      T}. \label{eq:regret1} 
  \end{align}
  Now we can compute $B$, and then balance the two terms.  While the
  potential uses differences of the form $x_t - x^*$, the key is to
  express as much as possible in terms of $x_{t+1} - x_t$, because the
  update rule~(\ref{eq:gd-basic}) implies
  \begin{equation}
    \label{eq:f2}
    x_{t+1} - x_t = - \eta\gr f_t(x_t).
  \end{equation}

  \textbf{The Change in Potential.} Using that $\norm{a+b}^2 -
  \norm{a}^2 = 2\ip{a,b} + \norm{b}^2$
  for the Euclidean norm,
  \begin{align}
    \frac{1}{2}(\norm{x_{t+1} - x^*}^2 - \norm{x_t - x^*}^2) 
    &= \braket{x_{t+1} -
      x_t}{x_t - x^*} + \frac12 \norm{x_{t+1} - x_t}^2  \notag \\
    &= \eta_t \braket{\gr f_t(x_t)}{x^* - x_t} +
      \frac{\eta_t^2}{2}\norm{\gr f_t(x_t)}^2 \label{eq:potchange-basic}
  \end{align}
  \textbf{The Amortized Cost:} Setting  $\eta_t = \eta$ for all steps,
  \begin{align}
     f_t(x_t) &- f_t(x^*) + \Phi_{t+1} - \Phi_t \notag \\
    &= f_t(x_t) - f_t(x^*) + \braket{\gr f_t(x_t)}{x^* - x_t} +
      \frac{\eta}{2}\norm{\gr f_t(x_t)}^2 
      \tag{by~(\ref{eq:potchange-basic})} \\
    &\leq 0 + \frac{\eta}{2}\norm{\gr f_t(x_t)}^2
    \quad \leq \quad \frac{\eta G^2}{2}. \tag{by convexity, and the
      bound on gradients}
  \end{align}
  Substituting for $B$ in~(\ref{eq:regret1}) and simplifying with $\eta
  = \frac{D}{G\sqrt{T}}$, we get the theorem.
\end{proof}

The regret bound implies a convergence result for the offline case,
i.e., for the case where $f_t = f$ for all $t$. Here, setting
$\widehat{x} := \frac1T \sum_{t = 0}^{T-1} x_t$ shows
\begin{align*}
  f(\widehat{x}) - f(x^*) &= f\left(\frac1T \sum_t x_t \right) - f(x^*) 
                            \leq \frac1T \sum_t \left( f(x_t) - f(x^*) \right) \leq
                            \frac{DG}{\sqrt{T}} \leq \eps,
\end{align*}
as long as $T \geq \big(\frac{DG}{\eps}\big)^2$ and
$\eta = \frac{\epsilon}{G^2}$. 
 
  If the time horizon $T$ is unknown, setting a
time-dependent step size of $\eta_t = \frac{D}{G\sqrt{t}}$ works, with
an identical proof.  It is also well-known that the convergence bound
above is the best possible in general, modulo constant factors (see,
e.g., \cite[Thm~3.2.1]{Nest-book} or~\cite[Thm~3.13]{Seb}).

\subsubsection{Projected Gradient Descent}
\label{sec:proj-GD}

If we want to solve the constrained minimization problem for a convex
body $K$, we update as follows:
\begin{empheq}[box=\widefbox]{gather}
  x_{t+1}' \gets x_{t} - \eta \, \gr f_t(x_t) \\ \label{eq:gd-proj}
  x_{t+1} \gets \Pi_K(x'_{t+1}).
\end{empheq}
where $\Pi_K(x') := \arg\min_{x \in K} \norm{x - x'}$ is the projection of
$x'$ onto the convex body $K$. See Figure \ref{fig:pgd}.

\begin{figure}
\hfill\includegraphics[scale=0.7]{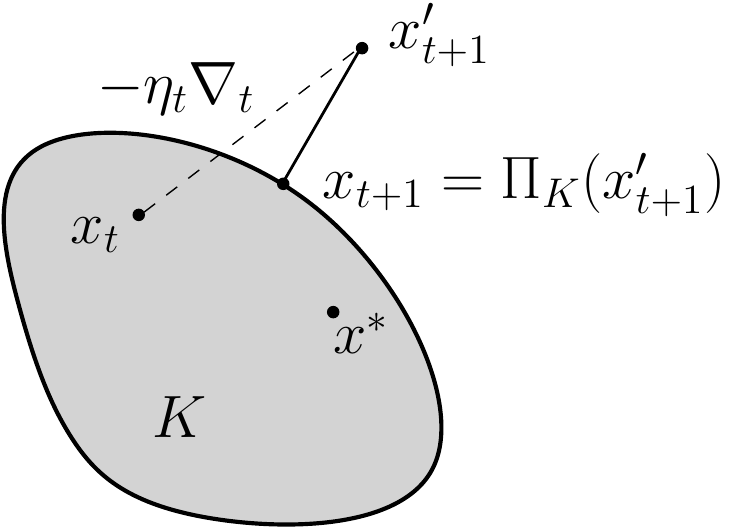}  \hfill\ %
\caption{Projected Gradient Descent}
\label{fig:pgd}
\end{figure}

\begin{proposition}[Pythagorean Property]
  \label{fct:pyth}
  Given a convex body $K \sse \R^n$, let $a \in K$ and $b' \in \R^n$. Let
  $b = \Pi_K(b')$. Then $\ip{a - b,b' - b} \leq 0$. Hence $\norm{a-b}^2
  \leq \norm{a-b'}^2$.
\end{proposition}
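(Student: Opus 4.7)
The plan is to prove the two claims in order: first the inner-product inequality $\ip{a-b,b'-b} \leq 0$ using the optimality of the projection $b = \Pi_K(b')$, and then derive the distance inequality by expanding $\|a-b'\|^2$ and plugging in the first bound.

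For the first part, I would exploit that $b$ minimizes $\|x - b'\|^2$ over the convex set $K$, together with the convexity of $K$. The key trick is to consider the line segment from $b$ to $a$: by convexity of $K$, the point $b_t := b + t(a-b) = (1-t)b + t a$ belongs to $K$ for every $t \in [0,1]$. Since $b$ is the minimizer of the squared distance to $b'$ over $K$, we have $\|b_t - b'\|^2 \geq \|b - b'\|^2$ for all such $t$. Expanding the left side via the identity $\|u+v\|^2 = \|u\|^2 + 2\ip{u,v} + \|v\|^2$ with $u = b-b'$ and $v = t(a-b)$, the constant term cancels and we are left with
\[
  2t\,\ip{a-b, b-b'} + t^2\|a-b\|^2 \geq 0 \qquad \forall t \in [0,1].
\]
Dividing by $t > 0$ and letting $t \to 0^+$ gives $\ip{a-b, b-b'} \geq 0$, which is exactly $\ip{a-b, b'-b} \leq 0$, as desired.

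For the second part, I would simply write $a - b' = (a-b) + (b - b')$ and expand the squared norm:
\[
  \|a-b'\|^2 = \|a-b\|^2 + 2\,\ip{a-b,\, b-b'} + \|b-b'\|^2.
\]
The first inner-product term is $\geq 0$ by what we just showed, and $\|b-b'\|^2 \geq 0$, so $\|a-b'\|^2 \geq \|a-b\|^2$.

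There is no real obstacle here; the only thing to be careful about is using the right sign convention when passing between $\ip{a-b, b-b'}$ and $\ip{a-b, b'-b}$, and making sure the first-order optimality argument is presented cleanly (i.e., via the convex combination staying in $K$, rather than invoking a subgradient machinery we have not set up). One could alternatively phrase the first step as a one-line consequence of the first-order optimality condition~(\ref{eq:conv-def}) applied to the convex function $x \mapsto \tfrac12\|x-b'\|^2$ at its constrained minimizer $b$, but the convex-combination argument is self-contained and equally short.
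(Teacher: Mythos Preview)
Your proof is correct. The second part is essentially identical to the paper's. For the first part, the paper instead argues geometrically: the hyperplane through $b$ normal to $b'-b$ supports $K$, so $a$ and $b'$ lie on opposite sides, making the angle between $a-b$ and $b'-b$ obtuse. Your variational argument (perturb $b$ toward $a$ along the segment in $K$ and use optimality of the projection) is the standard first-order-optimality derivation; it is a bit more self-contained since it avoids invoking a supporting-hyperplane statement and works directly from the definition of $\Pi_K$ as a minimizer. Both are short and standard; yours is arguably more rigorous as written, while the paper's is more pictorial.
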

\begin{proof}
  For the first part, the separating hyperplane at $b$ has $b'$ on one side, and
  all of $K$ (and hence $a$) on the other side. Hence the angle between
  $b'-b$ and $a-b$ must be obtuse, giving the negative inner product.
  For the second part, $\norm{a-b'}^2 = \norm{a-b}^2 + \norm{b-b'}^2 +
  2\ip{a-b,b-b'}$. But the latter two terms are positive, which proves
  the lemma.
\end{proof}

Using this, we get that for any point $x^* \in K$,
\[ \norm{ x_{t+1} - x^* }^2 \leq \norm{ x_{t+1}' - x^* }^2 \]
Using the same potential function~(\ref{eq:pot}), this inequality implies:
\[ f_t(x_t) - f_t(x^*) + \Phi_{t+1} - \Phi_t \leq 
  f_t(x_t) - f_t(x^*) + \frac{1}{2\eta}(\norm{x_{t+1}' - x^*}^2 -
  \norm{x_t - x^*}^2),
\]
or in other words, the projection only helps and  we can follow the analysis from \S\ref{sec:basic-GD} starting
at~(\ref{eq:potchange-basic}) to bound the amortized cost by
$\frac{\eta G^2}{2}$. 
So this gives a regret bound identical to that of Theorem~\ref{thm:gd-basic-regret}.

\subsection{Strong Convexity Analysis}
\label{sec:strong-conv}

Let us a prove a better regret (and convergence) bound when the
functions are ``not too flat''. A function $f$ is
$\alpha$-\emph{strongly convex}, where $\alpha \geq 0$, if for all $u,v$
\begin{gather}
  f(\lambda u + (1 - \lambda)v) \leq \lambda f(u) + (1-\lambda) f(v) -
  \frac{\alpha}2 \lambda(1-\lambda) \|
  v-u\|^2 \label{eq:strong-nondiff}
\end{gather}
for all $\lambda \in [0,1]$. For the case of differentiable functions,
this implies that for all $x,y$, we have
\begin{gather}
  f(y) \geq f(x) + \ip{ \gr f(x), y-x } + \frac\alpha2 \|
  y-x\|^2. \label{eq:strong}
\end{gather}
For $\alpha$-strongly convex functions $f_t$, we use the same update
step but vary the step size $\eta_t$. Specifically,
\begin{gather}
  \boxed{x_{t+1} \gets x_{t} - \eta_t \, \gr f_t(x_t)},  \label{eq:gd-sc}
\end{gather}
where $\eta_t = \frac{1}{\alpha (t+1)}$. We only present a proof for the
unconstrained case, the constrained case follows as in
\S\ref{sec:proj-GD}.
\begin{theorem}[GD: Strong Convexity]
  \label{thm:gd-sc-regret}
  If the functions $f_t$ are $\alpha$-strongly convex and $G$ is an
  upper bound on $\norm{\gr f_t(x)}$ for all $x$, the update
  rule~(\ref{eq:gd-sc}) with $\eta_t = \frac{1}{\alpha (t+1)}$ guarantees
  an average regret of
  \[ \frac1T \sum_{t = 0}^{T-1} \big( f_t(x_t) - f_t(x^*) \big) \leq
    \frac{G^2 \log T}{2T\alpha}. \]
\end{theorem}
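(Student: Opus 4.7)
The plan is to repeat the amortized-analysis template from Theorem~\ref{thm:gd-basic-regret}, but upgrade the potential to a \emph{time-varying} one whose coefficient grows linearly, namely
\[
\Phi_t \;=\; \tfrac{\alpha t}{2}\,\norm{x_t - x^*}^2.
\]
The reason to expect this works is that the new ingredient -- the $-\tfrac{\alpha}{2}\norm{x_t-x^*}^2$ term supplied by strong convexity~(\ref{eq:strong}) -- exactly matches the ``$\alpha$ per step'' increment that a linearly growing coefficient would need in order to telescope. Since $\Phi_0 = 0$ and $\Phi_T \ge 0$, the boundary terms drop out cleanly, so I only need to control the per-step residual.

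The steps I would carry out, in order, are as follows. First, apply strong convexity~(\ref{eq:strong}) to bound $f_t(x_t) - f_t(x^*) \le \ip{\gr f_t(x_t), x_t - x^*} - \tfrac{\alpha}{2}\norm{x_t-x^*}^2$. Second, reuse the identity~(\ref{eq:potchange-basic}) together with the update~(\ref{eq:gd-sc}) to rewrite the inner product as
\[
\ip{\gr f_t(x_t), x_t - x^*} \;=\; \tfrac{1}{2\eta_t}\bigl(\norm{x_t-x^*}^2 - \norm{x_{t+1}-x^*}^2\bigr) + \tfrac{\eta_t}{2}\norm{\gr f_t(x_t)}^2.
\]
Third, plug in $\eta_t = 1/(\alpha(t+1))$, which makes $1/\eta_t = \alpha(t+1)$, so the coefficient of $\norm{x_t-x^*}^2$ becomes $\tfrac{\alpha(t+1)}{2} - \tfrac{\alpha}{2} = \tfrac{\alpha t}{2}$, exactly matching $\Phi_t$, while the coefficient of $\norm{x_{t+1}-x^*}^2$ becomes $\tfrac{\alpha(t+1)}{2}$, matching $\Phi_{t+1}$. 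Bounding the gradient by $G$ then yields the per-step inequality
\[
f_t(x_t) - f_t(x^*) + \Phi_{t+1} - \Phi_t \;\le\; \tfrac{\eta_t}{2} G^2 \;=\; \tfrac{G^2}{2\alpha(t+1)}.
\]
Fourth, telescope from $t=0$ to $T-1$: the potential terms collapse using $\Phi_0 = 0$ and $\Phi_T \ge 0$, leaving a harmonic sum $\sum_{t=0}^{T-1}\tfrac{1}{t+1} \le 1 + \ln T$, which after dividing by $T$ yields the claimed average-regret bound of order $\tfrac{G^2 \log T}{2\alpha T}$.

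There is no real obstacle here; the only small cleverness is spotting that a time-varying coefficient $\alpha t$ in the potential is forced by the telescoping, and that strong convexity donates precisely the $\alpha/2$ needed each step to absorb the difference between $1/\eta_t$ and $1/\eta_{t-1}$. The constrained case then reduces to the unconstrained one exactly as in \S\ref{sec:proj-GD} via Proposition~\ref{fct:pyth}, since the potential is still a squared Euclidean distance to $x^* \in K$ and projection can only decrease it.
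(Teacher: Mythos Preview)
Your proposal is correct and follows essentially the same approach as the paper: the same potential $\Phi_t = \tfrac{\alpha t}{2}\norm{x_t-x^*}^2$, the same use of strong convexity~(\ref{eq:strong}) together with the identity~(\ref{eq:potchange-basic}) to get the per-step bound $\tfrac{\eta_t G^2}{2}$, and the same harmonic telescoping. The only difference is cosmetic ordering---the paper first expands $\Phi_{t+1}-\Phi_t$ and then cancels against strong convexity, whereas you first apply strong convexity and then match the potential coefficients---but the algebra is identical.
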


\begin{proof}
  The potential function is now
  \begin{gather}
    \boxed{\Phi_t = \frac{1}{2\eta_{t-1}}\norm{x_t -
        x^*}^2  = \frac{ t\alpha}{2} \norm{x_t -
        x^*}^2}  \label{eq:pot-sc}
  \end{gather}
  \textbf{The Change in Potential:}
  \begin{align}
    \Phi_{t+1} - \Phi_t &= \frac{\alpha (t+1)}{2}\norm{x_{t+1} - x^*}^2 -
                          \frac{\alpha t}{2}\norm{x_t - x^*}^2 \notag\\
                        &= \frac{\alpha}{2} \norm{x_{t} - x^*}^2 + \frac{1}{2\eta_{t}}
                          \left( \norm{x_{t+1} - x^*}^2  - \norm{x_{t} - x^*}^2 \right) \notag
    \\
                        &= \frac{\alpha}{2} \norm{x_{t} - x^*}^2 + \braket{\gr
                          f_t(x_t)}{x^* - x_t} + \frac{\eta_t}{2}\norm{\gr f_t(x_t)}^2 
                          \tag{by~(\ref{eq:potchange-basic})} 
  \end{align}
  \textbf{The Amortized Cost:}
  \begin{align}
    f_t(x_t) &- f_t(x^*) +  \Phi_{t+1} - \Phi_t \notag  \\
             &= \underbrace{f_t(x_t) - f_t(x^*) + \frac{\alpha}{2} \norm{x_{t} - x^*}^2 +
               \braket{\gr f_t(x_t)}{x^* - x_t}}_{\text{$\leq 0$ by $\alpha$-strong convexity}}  + \frac{\eta_t}{2}\norm{\gr
               f_t(x_t)}^2  \notag \\
             &\leq
               \frac{\eta_t}{2} \norm{
               \gr f_t(x_t) }^2 \quad \leq \quad \frac{\eta_t G^2}{2} \quad \textrm{(by bound on gradients)}
							\label{eq:potchange-sc}
  \end{align}
  Now summing over all time steps $t$, the total regret is
  \begin{align*}
    \sum_t \left( f_t(x_t) - f_t(x^*) \right)  
    &\leq \Phi_0 + \sum_t \frac{\eta_t}{2} G^2 \leq  0 + \frac{G^2 \log T}{2\alpha}.
  \end{align*}
Hence total regret only increases logarithmically as $\log T$ with time if the $f_t$ are strongly convex, as opposed to $\sqrt{T}$ in Theorem~\ref{thm:gd-basic-regret}. 
\end{proof}

This bound of $O(\log T)$ on the average regret is tight: Takimoto and
Warmuth~\cite{TakimotoW00} show a matching lower bound. However, in the
\emph{offline} optimization setting where we have a fixed function
$f_t=f$, using the same analysis but a better averaging shows a
convergence rate of $O(1/T)$ with respect to a convex combination of the
points $x_t$.

\begin{theorem}  
\label{thm:gd-sc-off}
Let $f$  be $\alpha$-strongly convex with gradients satisfying $\norm{\gr f(x)} \leq G $ for all $x$, and $x_t$ be the iterates produced by applying the update
  rule~(\ref{eq:gd-basic}) with $\eta_t = \frac{1}{\alpha t}$. For any $T\geq 1$, let $\overline{x}_T:= 
	\sum_{t=1}^T \lambda_t x_t$ denote the convex combination of $x_t$ with  $\lambda_t = \frac{2t}{T(T+1)}$. Then,
 \[f(\overline{x}_T) - f(x^*) \leq \frac{G^2}{\alpha (T+1)}.\] 
\end{theorem}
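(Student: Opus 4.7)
The plan is to extend the amortized analysis of Theorem~\ref{thm:gd-sc-regret} from uniform to polynomial-decay averaging, which is exactly what is needed to shave the $\log T$ factor. First, I would re-derive the per-step amortized bound for the step size $\eta_t = \frac{1}{\alpha t}$ of the current theorem (versus $\frac{1}{\alpha(t+1)}$ used in Theorem~\ref{thm:gd-sc-regret}). Following the same calculation but with the index shifted by one (so that $\frac{1}{2\eta_{t-1}} = \frac{\alpha(t-1)}{2}$ plays the role of the potential's coefficient), the natural choice is $\Phi_t = \frac{\alpha(t-1)}{2}\|x_t - x^*\|^2$ with $\Phi_1 = 0$. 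Strong convexity together with the identity~(\ref{eq:potchange-basic}) then yields
\[
f(x_t) - f(x^*) + \Phi_{t+1} - \Phi_t \;\le\; \frac{G^2}{2\alpha t}.
\]

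Next, I would multiply this per-step inequality by $\lambda_t = \frac{2t}{T(T+1)}$ and sum from $t = 1$ to $T$. The point of these weights is that they exactly cancel the $1/t$ factor on the right-hand side:
\[
\sum_{t=1}^T \lambda_t \cdot \frac{G^2}{2\alpha t} \;=\; \frac{G^2}{\alpha T(T+1)}\sum_{t=1}^T 1 \;=\; \frac{G^2}{\alpha(T+1)},
\]
which is precisely the target rate. By Jensen's inequality $f(\overline{x}_T) \le \sum_{t=1}^T \lambda_t f(x_t)$, so once we control the telescoped potential $\sum_t \lambda_t(\Phi_{t+1} - \Phi_t)$, the desired bound on $f(\overline{x}_T) - f(x^*)$ follows.

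The main obstacle is precisely this weighted telescoping: unlike the uniform-weight case in Theorem~\ref{thm:gd-sc-regret}, the non-constant $\lambda_t$ prevents the $\Phi$-terms from cancelling cleanly. I would handle it via summation by parts, rewriting $\sum_t \lambda_t(\Phi_{t+1}-\Phi_t)$ as $\lambda_T \Phi_{T+1} - \frac{2}{T(T+1)}\sum_{s=2}^T \Phi_s$; the second piece is non-positive and is discarded, while the first is an endpoint term to be absorbed using the strong-convexity consequence $\|x_{T+1} - x^*\|^2 \le \frac{2}{\alpha}(f(x_{T+1}) - f(x^*))$. An equivalent route is to fold the weighting into the potential from the start, working instead with $\Psi_t = \frac{\alpha t(t-1)}{2}\|x_t - x^*\|^2$ so that the weighted per-step bound becomes $t(f(x_t) - f(x^*)) + \Psi_{t+1} - \Psi_t \le \frac{G^2}{2\alpha}$ up to residual terms of the form $\frac{\alpha t}{2}\|x_{t+1}-x^*\|^2$ that are similarly absorbed via strong convexity at $x^*$. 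In either case the overall shape — amortize each step, telescope, and invoke convexity — is identical to the previous theorem; the new ingredient is just the choice of weights that turns the $\sum_t 1/t$ arising from $\eta_t$ into the constant $O(1/T)$.
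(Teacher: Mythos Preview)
Your overall plan---multiply the per-step amortized bound by $\lambda_t\propto t$, sum, and apply Jensen---is precisely what the paper does (it multiplies~(\ref{eq:potchange-sc}) by $t$ and sums). The paper's proof is terse and does not spell out the weighted telescoping at all; you are right that this is where the content lies.

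However, your handling of that obstacle has a sign error. After summation by parts you obtain
\[
\sum_{t=1}^T \lambda_t(\Phi_{t+1}-\Phi_t)
\;=\;\lambda_T\Phi_{T+1}\;-\;\frac{2}{T(T+1)}\sum_{s=2}^T\Phi_s.
\]
To drop the potential sum from the left side of
$\sum_t\lambda_t\big(f(x_t)-f(x^*)\big)+\sum_t\lambda_t(\Phi_{t+1}-\Phi_t)\le \frac{G^2}{\alpha(T+1)}$
you need that sum to be \emph{nonnegative}. ``Discarding'' the nonpositive second piece yields only the \emph{upper} bound $\sum_t\lambda_t(\Phi_{t+1}-\Phi_t)\le \lambda_T\Phi_{T+1}$, which is the wrong direction: when moved to the right-hand side the leftover $+\frac{2}{T(T+1)}\sum_s\Phi_s$ has a plus sign and cannot be thrown away or absorbed into the endpoint. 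Your alternative route with $\Psi_t=\frac{\alpha t(t-1)}{2}\|x_t-x^*\|^2$ has the identical defect: the residual $\frac{\alpha t}{2}\|x_{t+1}-x^*\|^2$ also lands on the right with the wrong sign, and replacing it via $\frac{\alpha}{2}\|x_{t+1}-x^*\|^2\le f(x_{t+1})-f(x^*)$ collapses the weighted sum you want back to the uniform sum (losing exactly the factor of $t$) plus an uncontrolled endpoint $T\big(f(x_{T+1})-f(x^*)\big)$.

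Concretely, with $\eta_t=\frac{1}{\alpha t}$ the distance coefficients after multiplying by $t$ are
$\frac{\alpha t(t-1)}{2}\|x_t-x^*\|^2-\frac{\alpha t^2}{2}\|x_{t+1}-x^*\|^2$,
and since $t^2<t(t+1)$ the negative coefficient at step $t$ never dominates the positive coefficient at step $t+1$; no increasing weight will make this telescope. The clean $O(1/T)$ telescoping actually needs the slightly larger step $\eta_t=\frac{2}{\alpha(t+1)}$, for which one gets
\[
t\big(f(x_t)-f(x^*)\big)\;\le\;\frac{\alpha t(t-1)}{4}\|x_t-x^*\|^2-\frac{\alpha t(t+1)}{4}\|x_{t+1}-x^*\|^2+\frac{G^2}{\alpha},
\]
and now the distance terms telescope exactly, yielding $\sum_t t\big(f(x_t)-f(x^*)\big)\le \frac{TG^2}{\alpha}$ without any residual.
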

\begin{proof}
Instead of summing up \eqref{eq:potchange-sc} directly over $t$ in the regret analysis above, 
we first multiply \eqref{eq:potchange-sc} by $t$, and then sum over $t$ to obtain 
\[\sum_{t=1}^T  t (f_t(x_t) - f_t(x^*)) \leq \frac{1}{2\alpha } T G^2.\]
Using $f_t=f$ and dividing by $T(T+1)/2$ throughout, and by the convexity of $f$ we obtain 
\[f(\overline{x}_T) - f(x^*) \leq \frac{G^2}{\alpha (T+1)}. \qedhere\] 
\end{proof}

Finally, we remark that in the constrained case the same analysis with the same potential function works, exactly for same reason as in \ref{sec:proj-GD}.

\newpage

\section{Bounds for Smooth Functions}
\label{sec:smooth}

We now turn to the setting where the functions are Lipschitz smooth, i.e., when the gradient does not change too
rapidly. We know that in the online case, the average regret of
$O(1/\sqrt{T})$ is tight even for linear functions \cite{CBL}. However we get
better guarantees for the \emph{offline} setting where the function
$f_t = f$ for all time steps. The potential functions now look more
like~(\ref{eq:meta}), and use the difference $(f(x_t) - f(x^*))$ in
function value, not just in action space.

Define a function $f$ to be $\beta$-\emph{Lipschitz
  smooth} (or simply $\beta$-smooth) 
if for all $u,v$
\begin{gather}
  f(\lambda u + (1 - \lambda)v) \geq \lambda f(u) + (1-\lambda) f(v) -
  \frac{\beta}2 \lambda(1-\lambda) \|
  v-u\|^2 \label{eq:smooth-nondiff}
\end{gather}
for all $\lambda \in [0,1]$. For the case of differentiable functions,
this is equivalent to saying that for $x,y$, we have
\begin{gather}
  f(y) \,\balert{\leq}\, f(x) + \ip{ \gr f(x), y-x } + \frac\beta2 \|
  y-x\|^2. \label{eq:smooth}
\end{gather}
Observe the inequalities here are in the opposite directions from the
definitions of convexity~(\ref{eq:conv-def}) and
strong-convexity~(\ref{eq:strong}). Indeed, smoothness implies that the
function does not ``grow too fast'' anywhere. The smoothness condition
is equivalent to requiring that the gradients are Lipschitz continuous,
i.e., 
$\|\gr f(x) - \gr f(y)\|_2 \leq \beta \|x-y\|_2$ for all $x,y$.

\subsection{Smooth Gradient Descent}
\label{sec:eucl-smooth}


The update rule in this case has a time-invariant multiplier 
(where we use $\gr_t := \gr f(x_t)$ for brevity).
\begin{gather}
  \boxed{x_{t+1} \gets x_{t} - \frac{1}{\beta} \, \gr_t}. \label{eq:gd-smooth-update}
\end{gather}

We first show an analysis based on a very natural potential, that gives a slightly
sub-optimal bound with an additional $\log T$ factor. We improve this
later by slightly modifying the potential.

\begin{theorem}[Smooth Functions]
  \label{thm:smooth}
  If $f$ is $\beta$-smooth and
  $D := \max_x \{ \norm{x - x^*}_2 \mid f(x) \leq f(x_0) \}$, the update
  rule~(\ref{eq:gd-smooth-update}) guarantees
  \[ f(x_T) - f(x^*) \leq
    \beta\, \frac{D^2 (1 + \ln T)}{2T}. \]   
\end{theorem}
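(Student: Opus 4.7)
The plan is to instantiate the meta-potential~(\ref{eq:meta}) with $a_t = t$ and $b_t = 0$, i.e., $\Phi_t := t\,(f(x_t) - f(x^*))$, exactly as hinted in the introduction. Since $\Phi_0 = 0$, if I can show an amortized bound $\Phi_{t+1} - \Phi_t \leq B_t$ with $\sum_{t=0}^{T-1} B_t \leq \frac{\beta D^2}{2}(1 + \ln T)$, then telescoping yields $T(f(x_T) - f(x^*)) \leq \sum_t B_t$, and dividing by $T$ gives the stated error bound.

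To bound the amortized cost, I would first expand
\[ \Phi_{t+1} - \Phi_t = (t+1)\big(f(x_{t+1}) - f(x_t)\big) + \big(f(x_t) - f(x^*)\big), \]
and control each piece with one standard tool. For the first piece, plugging the update $x_{t+1} = x_t - \frac{1}{\beta}\gr_t$ into the smoothness inequality~(\ref{eq:smooth}) gives the descent lemma $f(x_{t+1}) \leq f(x_t) - \frac{1}{2\beta}\norm{\gr_t}^2$. For the second piece, convexity~(\ref{eq:conv-def}) together with Cauchy--Schwarz gives $f(x_t) - f(x^*) \leq \braket{\gr_t}{x_t - x^*} \leq \norm{\gr_t}\cdot\norm{x_t - x^*}$. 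Combining these two inequalities produces a quadratic-in-$\norm{\gr_t}$ bound of the shape $-\frac{t+1}{2\beta}\norm{\gr_t}^2 + D\,\norm{\gr_t}$, whose maximum over $\norm{\gr_t}\ge 0$ is $\frac{\beta D^2}{2(t+1)}$. Taking $B_t := \frac{\beta D^2}{2(t+1)}$ and using the harmonic sum $\sum_{t=0}^{T-1}\frac{1}{t+1} \leq 1 + \ln T$ closes the argument.

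The main spot where a careful sentence is needed is the step replacing $\norm{x_t - x^*}$ by $D$, since $D$ is defined as a diameter over the sublevel set $\{x : f(x) \le f(x_0)\}$ rather than as an a~priori bound on $\norm{x_0 - x^*}$. The key observation, which the descent lemma gives for free, is that $f(x_t)$ is non-increasing along the trajectory, so every iterate stays in this sublevel set; hence $\norm{x_t - x^*} \leq D$ for all $t$, and the substitution is legitimate. Beyond this, the argument is straightforward algebra and a harmonic-sum estimate. The $\log T$ slack is exactly what one expects from the $a_t = t$ choice; the authors' remark that it can be shaved by a better choice of multipliers suggests that the sharper version uses a weighted combination involving $a_t \sim t$ with an additional correction, but the present statement does not require that refinement.
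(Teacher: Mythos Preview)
Your proposal is correct and matches the paper's proof essentially line for line: same potential $\Phi_t = t(f(x_t)-f(x^*))$, same decomposition, same use of the descent lemma and convexity plus Cauchy--Schwarz, and the same observation that monotonicity of $f(x_t)$ keeps all iterates in the sublevel set so $\norm{x_t-x^*}\le D$. The only cosmetic difference is that the paper bounds the resulting expression via a parametrized AM--GM (choosing $a=(t+1)/\beta$ to cancel the gradient terms) whereas you directly maximize the quadratic in $\norm{\gr_t}$; both yield the identical per-step bound $\frac{\beta D^2}{2(t+1)}$.
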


\begin{proof}
  To show a convergence rate of $O(1/t)$, perhaps the most natural
  approach is to consider the potential
  \[ \boxed{\Phi_t = t\cdot(f(x_t) - f(x^*))} \] and try to show that
  $\Phi_T = O(1)$.  This works, but gives a weaker bound of
  $\Phi_T = O(\log T)$ (note that conveniently, $\Phi_0 = 0$) and hence
  $f(x_T) - f(x^*) = O(\log T)/T$. Later we get rid of the
  logarithmic term.

  \textbf{The Potential Change:}
  \begin{align}
    \Phi_{t+1} - \Phi_t &= (t+1)(f(x_{t+1}) - f(x^*)) - t(f(x_t) - f(x^*))
                          \notag \\
                        &= (t+1)(f(x_{t+1}) - f(x_t)) + (f(x_t) - f(x^*)) \label{eq:pot-smooth}
  \end{align}
  To bound the first term, we use the smoothness of $f$ with $x = x_t$ and
  $y = x_{t+1} = x_t - \eta_t \gr_t$:
	  \begin{align}
    \textstyle f(x_{t+1}) \leq f(x_t) - \eta_t \cdot \| \gr_t \|_2^2 +
            \frac{\beta}{2} \cdot  \eta_t^2 \cdot \norm{\gr_t}_2^2. \notag
  \end{align}
  The choice of $\eta_t = 1/\beta$ minimizes the right hand side above to give 
  \begin{align}
    \textstyle f(x_{t+1}) \leq f(x_t) - \frac1{2\beta} \norm{\gr_t}_2^2. 
    \label{eq:sm-step} 
  \end{align}
		
  For the second term in~(\ref{eq:pot-smooth}), just use convexity and
  Cauchy-Schwarz:
  \begin{align}
    f(x_t) - f(x^*) \leq \ip{ \gr_t , x_t - x^* } &\leq \| \gr_t
    \|_2 \cdot \| x_t - x^*\|_2  \label{eq:2} \\ &\leq \nicefrac12\;( a \norm{\gr_t}^2 + (1/a) \norm{x_t -
      x^*}^2) , \notag
  \end{align}
  for any parameter $a > 0$.  Note that \eqref{eq:sm-step} ensures that
  $f(x_t) \leq f(x_{t-1}) \leq \cdots \leq f(x_0)$, so let us define $D
  := \max\{ \|x - x^*\|_2 \mid f(x) \leq f(x_0)\}$. 
  So the potential change is
  \begin{gather}
   \ts \Phi_{t+1} - \Phi_t \leq (t+1)\cdot(- \frac1{2\beta}) \| \gr_t
    \|_2^2 + \frac12 \; ( a \| \gr_t \|_2^2 + D^2/a).
  \end{gather}
  Choosing $a = \frac{t+1}\beta$ cancels the gradient terms. Hence the potential increase is
  at most $\frac{D^2 \beta}{2(t+1)}$, and 
  \[ f(x_T) - f(x^*) = \frac{\Phi_T}T = \frac1T \sum_{t = 0}^{T-1}
    (\Phi_{t+1} - \Phi_t) \leq \frac1T \sum_{t = 0}^{T-1} \frac{D^2}{2(t+1)} \beta \leq \beta\, \frac{D^2 (1+ \ln T)}{2T}. \qedhere\]
\end{proof}

The intuition is evident from~(\ref{eq:sm-step}) and~(\ref{eq:2}): we
improve a lot by~(\ref{eq:sm-step}) when the gradients are large, or
else we are close to the optimum by~(\ref{eq:2}). 

\paragraph{A Tighter Analysis.}

The logarithmic dependence in Theorem~\ref{thm:smooth} can be removed by
a simple trick of multiplying the potential by a linear term in $t$,
which avoids the sum over $1/t$.
\begin{theorem}[Smooth Functions: Take II]
  \label{thm:smoothII}
  If $f$ is $\beta$-smooth, and
  $D := \max_x \{ \norm{x - x^*}_2 \mid f(x) \leq f(x_0) \}$, the update
  rule~(\ref{eq:gd-smooth-update}) guarantees
  \[ f(x_T) - f(x^*) \leq
    \beta\, \frac{2D^2}{T+1}. \] 
\end{theorem}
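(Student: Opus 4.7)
The plan is to augment the earlier potential $\Phi_t = t\cdot(f(x_t) - f(x^*))$ with a quadratic distance term, so that the $O(1/t)$ loss per step — which was responsible for the $\log T$ factor — is absorbed rather than paid explicitly. Following the meta-template~(\ref{eq:meta}), I would try
\[ \Phi_t \;=\; A_t\,\big(f(x_t) - f(x^*)\big) \;+\; \tfrac{\beta}{2}\,\norm{x_t - x^*}_2^2, \]
with a linear multiplier $A_t$ (for instance $A_t = t$ or $A_t = t+1$) to be pinned down by the calculation. The goal is no longer $\Phi_{t+1} - \Phi_t \leq B_t$ with a nonzero $B_t$, but rather the sharper $\Phi_{t+1} \leq \Phi_t$, so that simple telescoping $\Phi_T \leq \Phi_0$ yields the bound.

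Next I would expand the change in potential. Writing $\Phi_{t+1} - \Phi_t = A_{t+1}(f(x_{t+1}) - f(x_t)) + (A_{t+1} - A_t)(f(x_t) - f(x^*)) + \tfrac{\beta}{2}(\norm{x_{t+1} - x^*}^2 - \norm{x_t - x^*}^2)$, I would bound the three pieces using exactly the tools already set up in the proof of Theorem~\ref{thm:smooth}: smoothness~(\ref{eq:sm-step}) gives $A_{t+1}(f(x_{t+1}) - f(x_t)) \leq -\tfrac{A_{t+1}}{2\beta}\norm{\gr_t}_2^2$; convexity gives $(A_{t+1}-A_t)(f(x_t) - f(x^*)) \leq (A_{t+1}-A_t)\ip{\gr_t, x_t - x^*}$; and the quadratic identity together with $x_{t+1} - x_t = -\tfrac{1}{\beta}\gr_t$ yields $\tfrac{\beta}{2}(\norm{x_{t+1} - x^*}^2 - \norm{x_t - x^*}^2) = -\ip{\gr_t, x_t - x^*} + \tfrac{1}{2\beta}\norm{\gr_t}_2^2$. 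Adding these gives
\[ \Phi_{t+1} - \Phi_t \;\leq\; \frac{1 - A_{t+1}}{2\beta}\,\norm{\gr_t}_2^2 \;+\; (A_{t+1} - A_t - 1)\,\ip{\gr_t, x_t - x^*}. \]

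The key step — and the only real design choice — is to pick $A_t$ so that both coefficients become nonpositive. Setting $A_{t+1} - A_t = 1$ kills the inner-product term, and then $A_{t+1} \geq 1$ makes the gradient-squared term nonpositive; so any linear schedule like $A_t = t$ or $A_t = t+1$ works, giving $\Phi_{t+1} \leq \Phi_t$. Telescoping yields $A_T(f(x_T) - f(x^*)) \leq \Phi_T \leq \Phi_0$, so it only remains to bound $\Phi_0$. For this, I would invoke smoothness at the optimum with $\gr f(x^*) = 0$ to get $f(x_0) - f(x^*) \leq \tfrac{\beta}{2}\norm{x_0 - x^*}^2 \leq \tfrac{\beta D^2}{2}$, plus the trivial $\tfrac{\beta}{2}\norm{x_0 - x^*}^2 \leq \tfrac{\beta D^2}{2}$, so $\Phi_0 = O(\beta D^2)$, and dividing by $A_T = \Theta(T)$ gives the claimed $O(\beta D^2 / (T+1))$ bound.

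The only part that needs care — the ``hard'' part such as it is — is the coordination between the multiplier $A_t$ and the distance coefficient: the inner-product $\ip{\gr_t, x_t - x^*}$ generated by convexity (scaled by $A_{t+1} - A_t$) must exactly match the one produced by expanding $\norm{x_{t+1} - x^*}^2 - \norm{x_t - x^*}^2$, which is what forces the distance term to have coefficient exactly $\beta/2$ and $A_t$ to grow by exactly $1$ per step. Everything else is bookkeeping on top of the one-line smoothness and convexity inequalities already used for Theorem~\ref{thm:smooth}.
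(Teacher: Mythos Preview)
Your proof is correct, but it is not the route the paper takes for Theorem~\ref{thm:smoothII}. The paper's fix here is purely in the value multiplier: it keeps a distance-free potential and simply boosts the coefficient from $t$ to $t(t+1)$, setting $\Phi_t = t(t+1)\,(f(x_t) - f(x^*))$. Then $\Phi_{t+1} - \Phi_t = (t+1)(t+2)(f(x_{t+1}) - f(x_t)) + 2(t+1)(f(x_t) - f(x^*))$; plugging in~(\ref{eq:sm-step}) and the Cauchy--Schwarz form of~(\ref{eq:2}) (i.e., $f(x_t) - f(x^*) \leq \|\gr_t\|\,D$) and maximizing over $\|\gr_t\|$ gives a per-step increase of at most $2\beta D^2 \cdot \tfrac{t+1}{t+2} \leq 2\beta D^2$. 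Summing yields $\Phi_T \leq 2T\beta D^2$, hence the bound. The point of this version is pedagogical: it shows that the $\log T$ in Theorem~\ref{thm:smooth} comes only from the harmonic sum, and multiplying the potential by an extra linear factor in $t$ turns the $\tfrac{1}{t}$ per-step loss into a constant.

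What you have written is essentially the paper's \emph{next} proof, Theorem~\ref{thm:smoothIII} in \S\ref{sec:takeIII}, which adds the $\tfrac{\beta}{2}\|x_t - x^*\|^2$ term and keeps the linear multiplier $A_t = t$. Your approach buys more: the potential is exactly monotone (no per-step $B_t$ to sum), and the final bound depends only on $\|x_0 - x^*\|$ rather than the sublevel-set diameter $D$, which is never larger and often much smaller. It also sets up the Nesterov proof in \S\ref{sec:nesterov} directly. So both arguments are in the paper; you just landed on the one presented as ``Take III'' rather than ``Take II.''
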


\begin{proof}
  The potential now changes to
  \[ \boxed{\Phi_t = t(t+1)\cdot(f(x_t) - f(x^*))} \] The potential
  change is
  \begin{align*}
    \Phi_{t+1} - \Phi_t &= (t+1)(t+2) \cdot (f(x_{t+1}) - f(x_t)) + 2(t+1) \cdot
                          (f(x_t) - f(x^*))
                          \intertext{Plugging in~(\ref{eq:sm-step})
                          and~(\ref{eq:2}) gives}
    &\leq \ts (t+1)(t+2) \cdot ( - \frac{1}{2\beta} \norm{ \gr_t }^2 ) + 2(t+1)
      \cdot \| \gr_t \|_2\cdot D \leq 2D^2 \beta\cdot \frac{t+1}{t+2},
  \end{align*}
  where the last inequality is the maximum value of the preceding
  expression obtained at $\norm{\gr_t} = \frac{2 \beta D}{t+2}$. Summing
  over the time steps, $\Phi_T \leq T \cdot 2D^2 \beta$, so
  \[f(x_T) - f(x^*) \leq \frac{2D^2 \beta \cdot T}{T(T+1)} = \beta\,
  \frac{2D^2 }{T+1}.\qedhere\]
\end{proof}

\subsubsection{Yet Another Proof}
\label{sec:takeIII}

Let's see yet another proof that gets rid of the logarithmic
term. Interestingly, the potential function now combines both the
difference in the function value, and the distance in the ``action''
space.

\begin{theorem}[Smooth Functions: Take III]
  \label{thm:smoothIII}
  If $f$ is $\beta$-smooth, the update rule~(\ref{eq:gd-smooth-update})
  guarantees
  \[ f(x_T) - f(x^*) \leq
    \beta\, \frac{\norm{x_0 - x^*}^2}{2T}. \] 
\end{theorem}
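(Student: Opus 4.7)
The hint at the end of the excerpt (and the general template in~(\ref{eq:meta})) strongly suggests using a potential that mixes a value-based term with a distance-based term. Concretely, the plan is to take
\[ \Phi_t \;=\; t\cdot(f(x_t)-f(x^*)) \;+\; \frac{\beta}{2}\,\norm{x_t-x^*}^2, \]
so that $\Phi_0 = \frac{\beta}{2}\norm{x_0-x^*}^2$, and to show the amortized inequality $\Phi_{t+1}-\Phi_t \le 0$. Telescoping then yields $T\,(f(x_T)-f(x^*)) \le \Phi_T \le \Phi_0$, which is exactly the claimed bound. Note the key qualitative difference from Theorems~\ref{thm:smooth} and~\ref{thm:smoothII}: there the potential grew (by an amount $\sim D^2\beta/t$ or $\sim D^2\beta$), and we paid for that growth at the end; here the distance-based term acts as a ``bank'' that absorbs all the increases, so the potential is actually non-increasing, and the $\ln T$ factor disappears without needing to argue about a sublevel-set diameter $D$.

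The main computation is to expand $\Phi_{t+1}-\Phi_t$ as
\[ \underbrace{(t+1)(f(x_{t+1})-f(x_t))}_{\text{smoothness term}} \;+\; \underbrace{(f(x_t)-f(x^*))}_{\text{convexity term}} \;+\; \underbrace{\tfrac{\beta}{2}\bigl(\norm{x_{t+1}-x^*}^2 - \norm{x_t-x^*}^2\bigr)}_{\text{distance term}}. \]
To the first term I will apply the one-step smoothness bound~(\ref{eq:sm-step}), giving $-\tfrac{t+1}{2\beta}\norm{\gr_t}^2$. To the second term I will apply convexity in the form $f(x_t)-f(x^*)\le \ip{\gr_t, x_t-x^*}$ (this is where the proof differs crucially from the Cauchy--Schwarz step~(\ref{eq:2}): we keep the inner product intact rather than upper-bounding it by a norm product). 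To the third term I will use the identity $\norm{a+b}^2-\norm{a}^2 = 2\ip{a,b}+\norm{b}^2$ with $a = x_t-x^*$, $b = x_{t+1}-x_t = -\tfrac{1}{\beta}\gr_t$, exactly as in~(\ref{eq:potchange-basic}), to get $-\ip{\gr_t, x_t-x^*} + \tfrac{1}{2\beta}\norm{\gr_t}^2$.

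The punchline will be a clean cancellation: the two inner-product terms from convexity and from the distance expansion cancel exactly, and the $\norm{\gr_t}^2$ coefficients combine to $-\tfrac{t+1}{2\beta}+\tfrac{1}{2\beta} = -\tfrac{t}{2\beta} \le 0$. Thus $\Phi_{t+1}-\Phi_t \le 0$, and the theorem follows.

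I do not anticipate a real obstacle here; the ``hard'' (really, only) part is guessing the right coefficients in the potential. The telling observation is that one needs the coefficient $\beta/2$ in front of the distance term (not some $b_t$ growing with $t$) so that the distance expansion produces an $\ip{\gr_t,x_t-x^*}$ that exactly cancels the one from convexity; any other scaling would leave residual first-order terms that one would then have to absorb using Cauchy--Schwarz, reintroducing an error term like in Theorems~\ref{thm:smooth}/\ref{thm:smoothII}.
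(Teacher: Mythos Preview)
Your proposal is correct and essentially identical to the paper's own proof: the paper uses the same potential $\Phi_t = t\,(f(x_t)-f(x^*)) + \frac{a}{2}\|x_t-x^*\|^2$, performs the same three-term decomposition, applies~(\ref{eq:sm-step}), convexity, and~(\ref{eq:potchange-basic}) respectively, and then chooses $a=\beta$ to cancel the inner-product terms, arriving at $\Phi_{t+1}-\Phi_t \le -\tfrac{t}{2\beta}\|\gr_t\|^2 \le 0$. The only cosmetic difference is that the paper leaves $a$ free until the cancellation step, whereas you fix it to $\beta$ from the outset.
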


\begin{proof} 
  Consider the potential of the form
  \[ \boxed{ \Phi_t = t\, (f(x_t) - f(x^*)) + \frac{a}{2} \|x_t - x^*\|^2 } \] where
  $a$ will be chosen based on the analysis below. As
  $\Phi_0 = \frac{a}{2} \|x_0-x^*\|^2$, if we show that $\Phi_t$ is non-increasing,
  \[   \frac{a}{2} \|x_0-x^*\|^2  = \Phi_0 \geq \Phi_T = T  (f(x_t) - f(x^*)) + \frac{a}{2} \|x_t - x^*\|^2  \]
  which gives $f(x_t) - f(x^*) \leq \frac{a}{2T}\|x_0-x^*\|^2$ as desired.

  The potential difference can be written as:
  \begin{align}
  \Phi_{t+1} - \Phi_t & =   (t+1) \underbrace{(f(x_{t+1}) -
      f(x_t))}_{(\ref{eq:sm-step})} + \underbrace{f(x_t) -
      f(x^*)}_{\text{(convexity)}} + \frac{a}{2}\, \underbrace{(\norm{x_{t+1} - x^*}^2 -
                        \norm{x_t - x^*}^2
                        )}_{(\ref{eq:potchange-basic})}. \label{eq:18} \\
    \intertext{Using the bounds from the mentioned inequalities, }
   \leq\ & (t+1) \cdot \overbrace{\ts -\frac{1}{2\beta} \norm{\gr_t}_2^2} +
    \overbrace{\ip{\gr_t,x_t - x^*}}  + \frac{a}{2} \big( 2\overbrace{\ts
           \eta_t \ip{\gr_t, x^* -
        x_t } + \eta_t^2 \norm{\gr_t}_2^2} \big) \label{eq:20}
  \end{align}
  where $\eta_t = \nicefrac{1}{\beta}$ in this case. Now, we set
  $a = 1/\eta_t = \beta$ to cancel the inner-product terms, which gives
  \[ \Phi_{t+1} - \Phi_t \leq -(\nicefrac{t}{2\beta}) \norm{\gr_t}^2
  \leq 0. \qedhere \]
\end{proof}
This guarantee is almost the same as in Theorem~\ref{thm:smoothII}, with
a slightly better definition of the distance term ($\|x_0-x^*\|^2$ vs $D^2$). But we will revisit
and build on this proof when we talk about Nesterov acceleration in
\S\ref{sec:nesterov}.

\subsubsection{Projected Smooth Gradient Descent}
\label{sec:proj-smooth}

We now consider the constrained minimization problem for a convex body
$K$. As previously, the update involves taking a step and then
projecting back onto $K$:
\begin{empheq}[box=\widefbox]{gather}
  x_{t+1}' \gets x_{t} - (1/\beta) \, \gr f(x_t) \notag\\ 
  x_{t+1} \gets \Pi_K(x'_{t+1}). \label{eq:gd-proj-smooth}
\end{empheq}

\begin{theorem}[Constrained Smooth Optimization]
  \label{thm:smooth-proj}
  If $f$ is $\beta$-smooth, the update rule~(\ref{eq:gd-proj-smooth})
  guarantees
  \[ f(x_T) - f(x^*) \leq \frac{ \frac\beta2 \,\norm{x_0 -
        x^*}^2}{T}. \]
\end{theorem}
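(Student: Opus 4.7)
The plan is to mimic the potential-function proof of Theorem~\ref{thm:smoothIII} essentially verbatim, using the same potential
\[ \Phi_t = t \cdot (f(x_t) - f(x^*)) + \frac{\beta}{2} \norm{x_t - x^*}^2, \]
and showing $\Phi_{t+1} \leq \Phi_t$. Once this monotonicity is in hand, we get $\Phi_T \leq \Phi_0 = \frac{\beta}{2}\norm{x_0 - x^*}^2$, and since the first term of $\Phi_T$ dominates, dividing by $T$ yields the claimed bound.

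The technical obstacle is that, unlike in Theorem~\ref{thm:smoothIII}, we no longer have the clean identity $x_{t+1} - x_t = -\frac{1}{\beta}\gr_t$: the projection breaks it. The natural first attempt, namely to use the Pythagorean inequality (Proposition~\ref{fct:pyth}) to bound the distance change by that of the unprojected point $x_{t+1}'$ just as in \S\ref{sec:proj-GD}, cancels the convexity/linear terms as before but leaves us needing the descent inequality $f(x_{t+1}) \leq f(x_t) - \frac{1}{2\beta}\norm{\gr_t}^2$, which is no longer automatic because $f(x_{t+1})$ could exceed $f(x_{t+1}')$ after projecting.

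I would instead handle both the function-value term and the distance term in one stroke by establishing the ``prox-gradient'' inequality: for every $y \in K$,
\[ f(x_{t+1}) \leq f(y) + \frac{\beta}{2}\bigl(\norm{y - x_t}^2 - \norm{y - x_{t+1}}^2\bigr). \]
The derivation combines three standard ingredients: (i) smoothness applied at $x_t, x_{t+1}$, giving $f(x_{t+1}) \leq f(x_t) + \ip{\gr_t, x_{t+1} - x_t} + \frac{\beta}{2}\norm{x_{t+1} - x_t}^2$; (ii) convexity applied at $x_t, y$, giving $f(x_t) \leq f(y) + \ip{\gr_t, x_t - y}$; and (iii) the first-order optimality of the projection $x_{t+1} = \Pi_K(x_t - \frac{1}{\beta}\gr_t)$, which supplies $\ip{\gr_t, y - x_{t+1}} \geq \beta \ip{x_t - x_{t+1}, y - x_{t+1}}$. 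Adding (i) and (ii), replacing the mixed inner product using (iii), and expanding $\norm{y - x_t}^2$ around $x_{t+1}$ via the parallelogram identity to absorb the remaining quadratic and linear terms gives the claimed inequality.

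With this inequality in hand, the potential argument closes in two applications. Setting $y = x_t$ gives $f(x_{t+1}) \leq f(x_t) - \frac{\beta}{2}\norm{x_{t+1} - x_t}^2 \leq f(x_t)$, which bounds the $t \cdot (f(x_{t+1}) - f(x_t))$ piece by zero. Setting $y = x^*$ gives exactly $(f(x_{t+1}) - f(x^*)) + \frac{\beta}{2}(\norm{x_{t+1} - x^*}^2 - \norm{x_t - x^*}^2) \leq 0$, which kills the remaining pieces of $\Phi_{t+1} - \Phi_t$. Hence $\Phi_{t+1} \leq \Phi_t$ for every $t$, and the telescoping $T (f(x_T) - f(x^*)) \leq \Phi_T \leq \Phi_0 = \frac{\beta}{2}\norm{x_0 - x^*}^2$ yields the theorem.
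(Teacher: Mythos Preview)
Your proposal is correct and follows essentially the same route as the paper: the same potential is used, and your ``prox-gradient'' inequality is algebraically equivalent to the paper's Claim~\ref{clm:proj-magic} (just expand $\norm{y-x_t}^2 - \norm{y-x_{t+1}}^2$), proved by the same combination of smoothness, convexity, and the projection's optimality condition. The paper likewise applies its claim twice, at $y=x_t$ and $y=x^*$; your packaging of the inequality absorbs the distance term directly and so makes the second application a touch cleaner, but the argument is the same.
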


\begin{proof} 
  We use the same potential as in Theorem~\ref{thm:smoothIII}:
  \[ \boxed{ \Phi_t =  t (f(x_t) - f(x^*)) + \frac{\beta}{2} \|x_t
      - x^*\|^2 }. \]
  The potential difference is now written as:
  \begin{align}
    \Phi_{t+1} - \Phi_t & =   t \underbrace{(f(x_{t+1}) -
                          f(x_t))}_{} + \underbrace{f(x_{t+1}) -
                          f(x^*)}_{} + \ts \frac{\beta}{2}\, \underbrace{(\norm{x_{t+1} - x^*}^2 -
                          \norm{x_t - x^*}^2
                          )}_{\norm{a}^2 - \norm{a+b}^2 = -2\ip{a,b} -
                          \norm{b}^2 }. \label{eq:19}
  \end{align}
  As $x_{t+1}$ is the projected point, we cannot directly
  use~\eqref{eq:sm-step} to bound the first and second terms, but we can
  show the following claim (which we prove later) that follows from
  smoothness:
  \begin{claim}
    \label{clm:proj-magic}
    For any $y \in K$, 
    $f(x_{t+1}) - f(y) \leq  \beta \ip{x_t - x_{t+1},x_t - y} - \nicefrac{\beta}{2} \norm{x_t - x_{t+1}}^2$
  \end{claim}
  Using Claim~\ref{clm:proj-magic} to bound the first and second terms
  of~(\ref{eq:19}), we get
  \begin{align*}
    &\leq  t \cdot \overbrace{0}
                           + \overbrace{\beta \ip {x_t - x_{t+1},x_t -
                           x^*} -
                           \nicefrac{\beta}{2} \norm{x_t - x_{t+1}}^2} - \ts \nicefrac{\beta}{2} \big( \overbrace{\ts
                           2\ip{x_t-x_{t+1}, x_{t+1} - x^*} + \norm{x_t - x_{t+1}}^2} \big) 	\\ 
                         & =  \beta \ip {x_t - x_{t+1},x_t - x_{t+1}} -
                           \beta \norm{x_t - x_{t+1}}^2  =0.
  \end{align*}
  This completes the proof.
\end{proof}

\begin{proof}[Proof of Claim~\ref{clm:proj-magic}]
  We write $ f(x_{t+1}) - f(y)  = (f(x_{t+1}) - f({x_t})) +
  (f(x_t) - f(y))$. Now using smoothness and convexity for the
first and second terms respectively, we have
\begin{align}f(x_{t+1}) - f(y) &\leq \overbrace{\ip{\gr_t,x_{t+1} - x_t}
                                 + \nicefrac{\beta}{2}
                                 \norm{x_{t+1}-x_t}^2} +\overbrace{
                                 \ip{\gr_t,x_t-y}} \notag \\
  &=  \ip{\gr_t,x_{t+1} - y} +  \nicefrac{\beta}{2} \norm{x_{t+1}-x_t}^2  \label{eq:proj-mag-smooth}
\end{align}
Since $\beta \ip{x_{t+1} - x'_{t+1},x_{t+1}-y} \leq 0$ by the
Pythagorean property Prop.~\ref{fct:pyth},
\begin{align*}
\ip{\gr_t,x_{t+1} - y} &= \ip{\beta (x_t -x'_{t+1}), x_{t+1} -y} \leq
  \beta \ip{x_t - x_{t+1},x_{t+1}-y} \\
&  = \beta \ip{x_t - x_{t+1},x_{t}-y} - \beta \norm{x_{t+1}-x_t}^2
\end{align*}
Substituting into  \eqref{eq:proj-mag-smooth} gives the result.
\end{proof}

\subsubsection{The Frank-Wolfe Method}
\label{sec:frank-wolfe}

One drawback of projected gradient descent is the projection step: given
a point $x'$ and a body $K$, finding the closest point $\Pi_K(x')$ might
be computationally expensive. Instead, we can use a different rule, the
\emph{Frank-Wolfe} method (also called \emph{conditional gradient
  descent}) \cite{FW56}, that implements each gradient step using linear optimization
over the body $K$. Loosely, at each timestep we find the point in $K$
that is furthest from the current point in the direction of the negative
gradient, and move a small distance towards it.

\begin{figure}[H]
\hfill\includegraphics[scale=0.5]{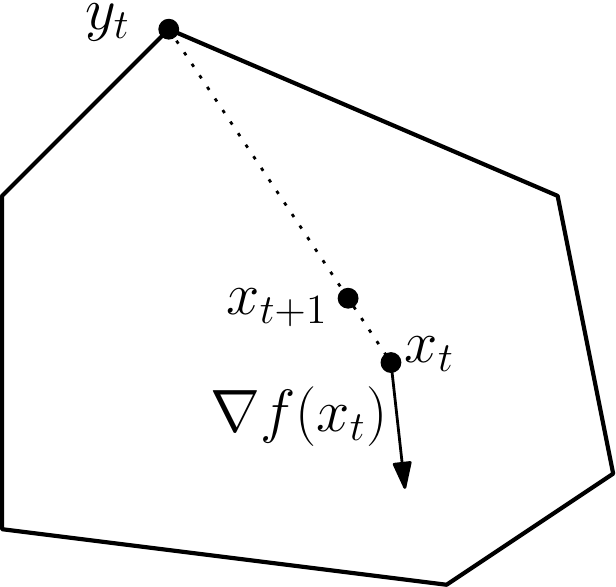}  \hfill\ %
\caption{The Frank-Wolfe Update}
\label{fig:fw}
\end{figure}

Formally, the update rule for Frank-Wolfe method is simple:
\begin{empheq}[box=\widefbox]{gather}
  y_t \gets \arg\min_{y \in K} \;\ip{\gr_t, y} \notag \\
  x_{t+1} \gets (1 - \eta_{t}) x_{t} + \eta_t y_t \label{eq:gd-FW}
\end{empheq}
Setting $\eta_t = 1/(t+1)$ in hindsight will give the following result.

\begin{theorem}[Smooth Functions: Frank-Wolfe]
  \label{thm:frankw}
  If $f$ is $\beta$-smooth, $K$ is a convex body with $D := \max_{x,y\in
    K} \norm{x-y}$, then the update rule~(\ref{eq:gd-FW})
  guarantees
  \[ f(x_T) - f(x^*) \leq
    \beta\, \frac{D^2 \, (1 + \ln T)}{2T}. \] 
\end{theorem}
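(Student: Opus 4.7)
The plan is to mimic the potential-function structure of Theorem~\ref{thm:smooth}, using $\Phi_t = t\,(f(x_t) - f(x^*))$, since we still want a $1/T$-type rate (up to the $\ln T$ factor) and the potential has the convenient initial value $\Phi_0 = 0$. Writing
\[
\Phi_{t+1} - \Phi_t \;=\; (t+1)\bigl(f(x_{t+1}) - f(x_t)\bigr) \;+\; \bigl(f(x_t) - f(x^*)\bigr),
\]
as in~(\ref{eq:pot-smooth}), I would bound each piece separately, using the linear-minimization step to substitute for the projection/Pythagorean machinery used in \S\ref{sec:proj-smooth}.

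For the first piece, since $x_{t+1} - x_t = \eta_t(y_t - x_t)$, smoothness~(\ref{eq:smooth}) gives
\[
f(x_{t+1}) - f(x_t) \;\leq\; \eta_t \ip{\gr_t,\, y_t - x_t} \;+\; \tfrac{\beta \eta_t^2}{2}\,\|y_t - x_t\|^2 \;\leq\; \eta_t \ip{\gr_t,\, y_t - x_t} \;+\; \tfrac{\beta \eta_t^2 D^2}{2},
\]
using $\|y_t - x_t\| \leq D$. Here comes the crucial Frank-Wolfe observation, which plays the role that Cauchy-Schwarz (or the Pythagorean property) plays elsewhere: since $y_t$ minimizes the linear function $\ip{\gr_t,\cdot}$ over $K$, and $x^* \in K$, we have $\ip{\gr_t, y_t - x_t} \leq \ip{\gr_t, x^* - x_t}$, and then convexity gives $\ip{\gr_t, x^* - x_t} \leq f(x^*) - f(x_t)$. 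Combined, this yields
\[
f(x_{t+1}) - f(x_t) \;\leq\; -\eta_t\,(f(x_t) - f(x^*)) \;+\; \tfrac{\beta \eta_t^2 D^2}{2}.
\]

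Plugging back into the change in potential with $\eta_t = 1/(t+1)$, the $(t+1)\eta_t = 1$ coefficient exactly cancels the $(f(x_t) - f(x^*))$ term, leaving an amortized increase of
\[
\Phi_{t+1} - \Phi_t \;\leq\; (t+1)\cdot \tfrac{\beta D^2}{2(t+1)^2} \;=\; \tfrac{\beta D^2}{2(t+1)}.
\]
Telescoping over $t=0,\dots,T-1$ and using $\Phi_0 = 0$ plus the harmonic-sum bound $\sum_{t=0}^{T-1} 1/(t+1) \leq 1 + \ln T$ gives $\Phi_T \leq \frac{\beta D^2 (1 + \ln T)}{2}$, and dividing by $T$ produces the stated bound.

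The main conceptual point — not really an obstacle, but the thing one has to notice — is that the linear-minimization oracle provides a clean substitute for projection: one does not bound $\ip{\gr_t, y_t - x_t}$ by norms, but directly by the optimality gap via the two-step chain $\ip{\gr_t, y_t - x_t} \leq \ip{\gr_t, x^* - x_t} \leq f(x^*) - f(x_t)$. After that, choosing $\eta_t = 1/(t+1)$ is forced by the desire to exactly cancel the $(f(x_t) - f(x^*))$ term from $\Phi_{t+1} - \Phi_t$, and the logarithmic factor is the unavoidable price paid for the $\eta_t^2$ residual, analogous to what happened in Theorem~\ref{thm:smooth} before the trick of Theorem~\ref{thm:smoothII}.
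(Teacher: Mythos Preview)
Your proof is correct and essentially identical to the paper's: same potential $\Phi_t = t(f(x_t) - f(x^*))$, same use of smoothness on $x_{t+1} - x_t = \eta_t(y_t - x_t)$, same key step of replacing $y_t$ by $x^*$ via the linear-minimization optimality, and the same choice $\eta_t = 1/(t+1)$ to cancel the first-order term before summing the harmonic residual. The only cosmetic difference is that the paper keeps the inner products $\ip{\gr_t, x^* - x_t}$ and $\ip{\gr_t, x_t - x^*}$ and cancels them directly, whereas you first convert $\ip{\gr_t, x^* - x_t}$ into $-(f(x_t) - f(x^*))$ via convexity and then cancel; this is the same computation reorganized.
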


\begin{proof}
  We use the simplest potential function from Theorem~\ref{thm:smooth}:
  \[ \boxed{\Phi_t = t\cdot(f(x_t) - f(x^*))}, \] and hence the change
  in potential is again:
  \begin{gather}
    \Phi_{t+1} - \Phi_t = (t+1)(f(x_{t+1}) - f(x_t)) + (f(x_t) -
    f(x^*)) \label{eq:17}
  \end{gather}
  To bound the change in potential~(\ref{eq:17}), we observe
  that $x_{t+1} - x_t = \eta_{t} (y_t - x_t)$.
  \begin{align*}
    f(x_{t+1}) - f(x_t) &\leq \ip{\gr_t, x_{t+1} - x_t} +
                          \frac{\beta}{2} \norm{x_{t+1} - x_t}^2 \tag{by smoothness}\\
                        &= \eta_t \ip{\gr_t, y_t
                          - x_t} + \frac{\beta\eta_t^2}{2} \norm{y_t- x_t}^2.\\
                        &\leq \eta_t \ip{\gr_t, x^*
                          - x_t} + \frac{\beta\eta_t^2}2 \norm{y_t- x_t}^2.
                          \tag{by optimality of $y_t$}\\
    f(x_t) - f(x^*) &\leq \ip{\gr_t, x_t - x^*} \tag{by convexity}
  \end{align*}
  Setting $\eta_t := \frac{1}{t+1}$ cancels the linear terms and hence
  the potential change~(\ref{eq:17})
  is at most $\beta\eta_t D^2/2$. Summing over $t$ and using
  $\Phi_0 = 0$, the final potential $\Phi_T \leq D^2 (1+\ln T)$, and hence
  $f(x_T) - f(x^*) = \beta\;\frac{(1 + \ln T)\cdot D^2}{2T}$.
\end{proof}

We can remove the logarithmic dependence in the error by multiplying the
potential by $(t+1)$ as in Theorem~\ref{thm:smoothII}; this gives the
following theorem, whose simple proof we omit.

\begin{theorem}[Smooth Functions: Frank-Wolfe, Take II]
  \label{thm:frankw2}
  If $f$ is $\beta$-smooth, $K$ is a convex body with $D := \max_{x,y\in
    K} \norm{x-y}$, then the update rule~(\ref{eq:gd-FW}) with $\eta_t = 2/(t+1)$
  guarantees
  \[ f(x_T) - f(x^*) \leq 2 \beta\, \frac{D^2 }{T+1}. \]
\end{theorem}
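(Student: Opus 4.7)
The plan is to mimic the boosting trick from Theorem~\ref{thm:smoothII}: multiply the potential used in Theorem~\ref{thm:frankw} by an extra linear factor in $t$ so that the harmonic sum $\sum 1/t$ disappears. Concretely, I would take
\[ \Phi_t = t(t+1)\bigl(f(x_t) - f(x^*)\bigr), \]
which still satisfies $\Phi_0 = 0$. The goal will be to show that $\Phi_{t+1} - \Phi_t \leq C \beta D^2$ for an absolute constant $C$, so that telescoping gives $\Phi_T \leq C\beta D^2 \cdot T$ and dividing by $T(T+1)$ yields the claimed $O(\beta D^2/T)$ bound.

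The key computation is to expand
\[ \Phi_{t+1} - \Phi_t = (t+1)(t+2)\bigl(f(x_{t+1}) - f(x_t)\bigr) + 2(t+1)\bigl(f(x_t) - f(x^*)\bigr), \]
and then reuse exactly the two inequalities that appeared inside the proof of Theorem~\ref{thm:frankw}. Namely, $\beta$-smoothness along the Frank--Wolfe step $x_{t+1} - x_t = \eta_t(y_t - x_t)$, combined with the optimality of $y_t$ in the direction $\gr_t$, yields
\[ f(x_{t+1}) - f(x_t) \leq \eta_t \ip{\gr_t, x^* - x_t} + \tfrac{\beta \eta_t^2}{2}\norm{y_t - x_t}^2, \]
and convexity gives $f(x_t) - f(x^*) \leq \ip{\gr_t, x_t - x^*}$. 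Substituting these into the expansion above produces a linear combination of $\ip{\gr_t, x^* - x_t}$ with coefficient $(t+1)(t+2)\eta_t - 2(t+1)$, plus a quadratic-in-$\gr_t$ (really, bounded by $D^2$) error term. I would then choose $\eta_t$ to make that bracket vanish, and bound $\norm{y_t - x_t} \leq D$ in the remaining term; this leaves $\Phi_{t+1} - \Phi_t \leq 2\beta D^2$ (up to the indexing constant), and the theorem follows by summing over $t$ and using $\Phi_0 = 0$.

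The main thing to be careful about is matching the indexing: the bracket $(t+1)(t+2)\eta_t - 2(t+1)$ really wants $\eta_t = 2/(t+2)$ for a clean cancellation, and recovering the stated $\eta_t = 2/(t+1)$ just amounts to reconciling whether the recurrence is indexed by the "from" step $t$ or the "to" step $t+1$. Beyond that bookkeeping, no new ideas are needed: the proof is mechanically the Frank--Wolfe analysis of Theorem~\ref{thm:frankw} with the boosted potential of Theorem~\ref{thm:smoothII}.
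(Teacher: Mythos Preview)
Your proposal is correct and is exactly the approach the paper intends: it explicitly says to multiply the potential by $(t+1)$ as in Theorem~\ref{thm:smoothII} and omits the details, which are precisely the ones you supply. Your flag about the step size is accurate---the clean cancellation gives $\eta_t = 2/(t+2)$ for the step $x_t \to x_{t+1}$, and reconciling this with the stated $\eta_t = 2/(t+1)$ is purely an indexing convention (and indeed $\eta_0 = 2$ would otherwise not lie in $[0,1]$).
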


\subsection{Well-Conditioned Functions}
\label{sec:well-cond}

If a function is both $\alpha$-strongly convex and $\beta$-smooth, it
must be that $\alpha \leq \beta$. The ratio $\kappa := \beta/\alpha$ is
called the \emph{condition number} of the convex function. We now show a
much stronger convergence guarantee for ``well-conditioned'' functions,
i.e., functions with small $\kappa$ values. The update rule is the same
as for smooth functions:
\begin{gather}
  \boxed{x_{t+1} \gets x_{t} - \frac{1}{\beta} \, \gr_t}.  \label{eq:gd-wellcond}
\end{gather}

\begin{theorem}[GD: Well-Conditioned]
  \label{thm:smooth-wellcond}
  Given a function $f$ that is both $\alpha$-strongly convex and
  $\beta$-smooth, define $\kappa := \beta/\alpha$.	
	The update
  rule~(\ref{eq:gd-wellcond}) ensures
  \begin{gather}
    f(x_T) - f(x^*) \leq \exp(-T/\kappa) \cdot ( f(x_0) - f(x^*)
    ) \quad\quad\text{for all $x^*$.} \notag
  \end{gather}
\end{theorem}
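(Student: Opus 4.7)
The plan is to take the potential to simply be $\Phi_t = f(x_t) - f(x^*)$ and show that it contracts geometrically at each step, i.e.,
\[ \Phi_{t+1} \leq \bigl(1 - \tfrac{1}{\kappa}\bigr)\, \Phi_t. \]
Iterating this and using $1-x \leq e^{-x}$ then gives $\Phi_T \leq (1-1/\kappa)^T \Phi_0 \leq e^{-T/\kappa}\Phi_0$, which is exactly the claim.

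To establish the contraction, I would combine two ingredients, each using one of the two hypotheses on $f$. The first is the one-step progress bound for smooth gradient descent already derived as~(\ref{eq:sm-step}): since the update rule is $x_{t+1} = x_t - \tfrac{1}{\beta} \gr_t$, smoothness gives
\[ f(x_{t+1}) \leq f(x_t) - \tfrac{1}{2\beta}\|\gr_t\|^2, \]
so $\Phi_{t+1} \leq \Phi_t - \tfrac{1}{2\beta}\|\gr_t\|^2$. This tells us that we make a lot of progress when the current gradient is large.

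The second ingredient is a gradient lower bound in terms of the suboptimality, which comes from strong convexity. Applying~(\ref{eq:strong}) at $x = x_t$ and minimizing the right-hand side over $y \in \R^d$ (the minimum of a quadratic is attained at $y = x_t - \tfrac{1}{\alpha}\gr_t$) gives, for every $y$,
\[ f(y) \geq f(x_t) - \tfrac{1}{2\alpha}\|\gr_t\|^2. \]
Specializing $y = x^*$ yields $\|\gr_t\|^2 \geq 2\alpha\, \Phi_t$, a Polyak--{\L}ojasiewicz-type inequality. So whenever the function value is far from optimal, the gradient must be correspondingly large.

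Plugging the second bound into the first gives
\[ \Phi_{t+1} \leq \Phi_t - \tfrac{\alpha}{\beta}\, \Phi_t = \bigl(1 - \tfrac{1}{\kappa}\bigr)\Phi_t, \]
which is the desired contraction. There is no real obstacle here; the only non-mechanical step is recognizing that the strong convexity inequality~(\ref{eq:strong}), when minimized over $y$, converts the known ``large-gradient progress'' bound from smoothness into an actual rate on $\Phi_t$ itself, without needing any distance-based term in the potential at all.
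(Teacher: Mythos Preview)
Your proof is correct and is essentially the same as the paper's. The paper packages the argument as showing the potential $\Phi_t = (1+\gamma)^t\,(f(x_t)-f(x^*))$ with $\gamma = 1/(\kappa-1)$ is non-increasing, but since $(1+\gamma)^{-1} = 1 - 1/\kappa$, this is exactly your contraction $\Phi_{t+1} \leq (1-1/\kappa)\Phi_t$ in disguise; the two key inequalities used---the smoothness step~(\ref{eq:sm-step}) and the strong-convexity bound $f(x_t)-f(x^*)\leq \tfrac{1}{2\alpha}\|\gr_t\|^2$---are identical, with only a cosmetic difference in how the latter is derived (the paper uses $\ip{a,b}-\|b\|^2/2\leq \|a\|^2/2$, you minimize the quadratic lower bound over $y$).
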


\begin{proof}
  We set $\gamma = 1/(\kappa-1)$ for brevity\footnote{Note that
    $\kappa=1$ iff $f(x)= a x^\intercal x + b^\intercal x +c$ for
    suitable scalars $a,c$ and $b \in \R^n$; in this case it is easily checked that the optimum solution $x^*$  is reached in a single step.}, and use the potential
  \begin{gather}
    \boxed{ \Phi_t = (1+\gamma)^t \cdot (f(x_t) -
      f(x^*))} \label{eq:wellcond}
  \end{gather}
  This is a natural potential to use, as we wish to show that $f(x_T) -f(x_0)$ falls exponentially with $T$. 

  \textbf{The Potential Change:} A little rearrangement gives us
  \begin{gather}
    \Phi_{t+1} - \Phi_t = (1+\gamma)^t \cdot \bigg( (1+\gamma)
    \big(f(x_{t+1}) - f(x_t)\big) + \gamma \big(f(x_t) - f(x^*)\big)
    \bigg). \label{eq:wellcond-potchange}
  \end{gather}
  We bound the two terms separately.  Using the smoothness analysis
  from~(\ref{eq:sm-step}):
  \[ f(x_{t+1}) - f(x_t) \leq - \frac1{2\beta} \norm{ \gr_t}^2. \] And by the definition of strong convexity,
  \[ f(x_t) - f(x^*) \leq \ip {\gr_t, x_t - x^*} -
    \frac{\alpha}{2}\norm{x_t-x^*}^2 \leq \frac{1}{2\alpha}
    \norm{\gr_t}^2
  \]
  where the second inequality uses
  $\ip{a, b} - \|b\|^2/2 \leq \|a\|^2/2 $.  Plugging this back
  into~\eqref{eq:wellcond-potchange} gives
  \[ (1+\gamma)^t \left( - \frac{1+\gamma}{2\beta} + \frac{\gamma}{2
        \alpha} \right ) \norm{\gr_t}^2\] which is $0$
  by our choice of $\gamma$. Hence, after $T$ steps,
  \begin{align}
    f(x_T) - f(x^*) &\leq (1 + \gamma)^{-T} (f(x_0) - f(x^*)) =
    (1-\nicefrac{1}{\kappa})^T (f(x_0)-f(x^*)) \notag \\
    &\leq
    e^{-T/\kappa} (f(x_0) - f(x^*)). \label{eq:1} 
  \end{align}
  Hence the proof.
	\end{proof}
	
  Here we can show that the algorithm's point $x_T$ also gets rapidly closer to $x^*$. If $x^*$ is the optimal point, we know $\gr f(x^*) = 0$. Now
  smoothness gives 
  $f(x_0) - f(x^*) \leq \frac{\beta}{2} \norm{x_0 - x^*}^2$, and strong
  convexity gives
  $\frac{\alpha}{2} \norm{x_T - x^*}^2 \leq f(x_T) - f(x^*)$. Plugging
  into~(\ref{eq:1}) gives us that
  \[ \norm{x_T - x^*}^2 \leq \kappa e^{-T/\kappa}\cdot \norm{x_0 -
      x^*}^2. \qedhere \]

  A few remarks. Firstly, Theorem~\ref{thm:smooth-wellcond} implies that
  reducing the error by a factor of $1/2$ can be achieved by increasing
  $T$ \emph{additively} by $\kappa \ln 2$. Hence if the condition number
  $\kappa$ is constant, every constant number of rounds of gradient
  descent gives us one additional bit of accuracy! This behavior, where
  getting error bounded by $\eps$ requires $O(\log \eps^{-1})$ steps, is
  called \emph{linear convergence} in the numerical analysis literature.

  One may ask if the convergence for smooth, and for well-conditioned
  functions is optimal as a function of $T$. The answer is no: a famed
  result of Nesterov gives faster (and optimal) convergence rates. We
  see this result and a potential-function-based proof in \S\ref{sec:nesterov}.

  Finally, the proof of Theorem~\ref{thm:smooth-wellcond} can be
  extended to the constrained case using the same potential function and
  the update rule~\eqref{eq:gd-proj-smooth}, but now using an analog of
  Claim~\ref{clm:proj-magic} that shows that for any $y \in K$,
  \[ f(x_{t+1}) - f(y) \leq \beta \ip{x_t - x_{t+1},x_t - y} -
    \nicefrac{\beta}{2} \norm{x_t - x_{t+1}}^2 - \nicefrac{\alpha}{2}
    \norm{x_t - y}^2. \] We omit the simple proof.


\newpage
\section{The Mirror Descent Framework}
\label{sec:MD} 

The gradient descent algorithms in the previous sections work by adding
some multiple of the gradient to current point. However, this should
strike the reader as somewhat strange, since the point $x_t$ and the
gradient $\gr f(x_t)$ are objects that lie in different spaces and
should be handled accordingly. In particular, if $x_t$ lies in some
vector space $E$, the gradient $\gr f(x_t)$ lies in the dual vector
space $E^*$. (This did not matter earlier since $\R^n$ equipped with the
Euclidean norm is self-dual, but now we want to consider general norms
and would like to be careful.)

A key insight of Nemirovski and Yudin~\cite{NY} was that substantially more
general and powerful results can be obtained, without much additional
work, by considering these spaces separately. 
For example, it is well-known (and we will show) that the classic multiplicative-weights update
method can be obtained as a special case of this general approach.

\subsection{Basic Mirror Descent}
\label{sec:basic-MD}

The key idea in mirror descent is to define an injective mapping between $E$ to
$E^*$, which is called the \emph{mirror map}. Given a point $x_t$, we first map
it to $E^*$, make the gradient update there, and then use the inverse
map back to obtain the point $x_{t+1}$.

We start some basic concepts and notation. Consider some vector space
$E$ with an inner product $\langle \cdot, \cdot \rangle$, and define a
norm $\norm{\cdot}$ on $E$. To measure distances in $E^*$, we use the
\emph{dual norm}  defined as
\begin{gather}
  \norm{y}_* := \max_{x : \norm{x} = 1} \ip{x,y}. \label{eq:dual-norm}
\end{gather}
By definition we
have
\begin{gather}
  \ip{x,y} \leq \norm{x}\cdot \norm{y}_*, \label{eq:gen-cs}
\end{gather}
which is often referred to as the \emph{generalized Cauchy-Schwarz}
inequality.

A function $h$ is \emph{$\alpha$-strongly convex} with respect to
$\norm{\cdot}$ if
\[ h(y) \geq h(x) + \ip{\gr h(x),y-x} + \frac{\alpha}{2} \|y-x\|^2 \]
Such a strongly convex function $h$ defines a map from $E$ to $E^*$ via
its gradient: indeed, the map $x \mapsto \gr h(x)$ takes the point
$x \in E$ into a point in the dual space $E^*$.  The strong convexity
ensures that the map is $1$-$1$ (i.e., $\gr h(x) \neq \gr h(y) $ for
$x \neq y$).  Moreover, the map $\gr h(\cdot)$ is also surjective, so
for any $\theta \in E^*$ there is an inverse $x \in E$ such that
$\gr h(x) = \theta$. In fact, this inverse map is given by the gradient
of the Fenchel dual for $h$, i.e.,
$\gr h^*(\theta) = x \iff \gr h(x) = \theta$. (For the reader not
familiar with Fenchel duality, it suffices to interpret
$\gr h^*(\theta)$ merely as $(\gr h)^{-1}(\theta)$.) Readers interested
in the technical details can see, e.g.,~\cite{BT03} or \cite[Chapter~7]{bertsekas03}.

\subsubsection{The Update Rules}

\begin{figure}
  \begin{center}
    \includegraphics[scale=0.8]{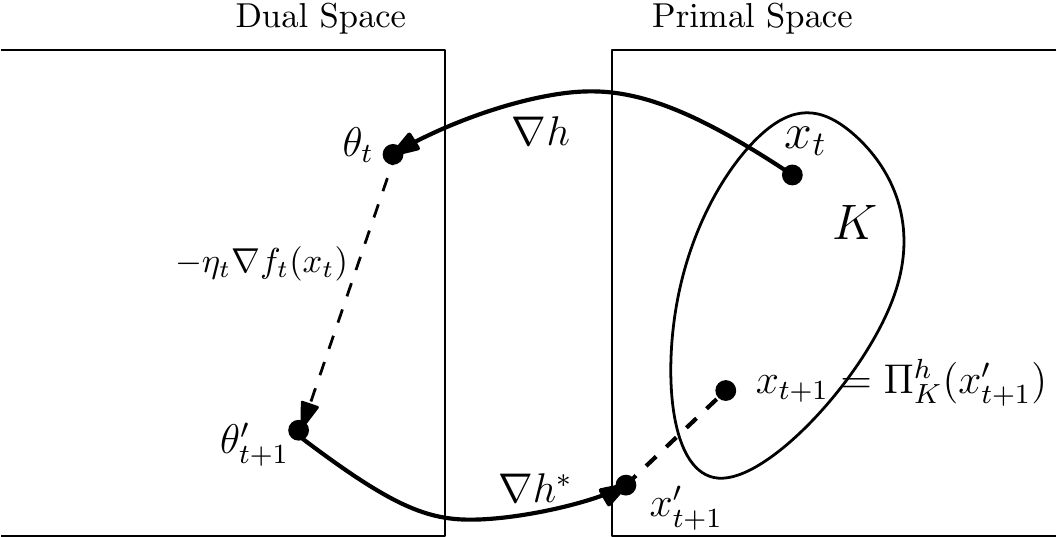}
  \end{center}
\caption{Mirror Descent}
\label{fig:md}
\end{figure}

Since $\gr h: E \to E^*$ gives us a map from the primal space $E$ to the
dual space $E^*$, we keep track of the image point
$\theta_t = \gr h(x_t)$ as well. Now, the updates are the natural ones,
given by
\begin{empheq}[box=\widefbox]{align}
  \theta_{t+1}' &= \theta_t - \eta_t \gr f_t(x_t) \label{eq:md} \\
	x'_{t+1} & = \gr h^*(\theta_{t+1}') \notag \\
  x_{t+1} &= \arg\min_{x \in K} \hdiv{x}{x_{t+1}'}. \notag
\end{empheq}
In other words, given $x_t \in E$, we add $\eta_t$ times the negative gradient to its
image $\theta_t = \gr h(x_t)$ in the dual space to get $\theta'_{t+1}$,
pull the result back to $x_{t+1}' \in E$ (using the inverse mapping
$x_{t+1}' = \gr h^*(\theta'_{t+1})$), and project it back onto $K$ to
get $x_t$.  Of course, we may not want to use the Euclidean distance for
the projection; the ``right'' distance in this case is the
\emph{Bregman divergence}, which we discuss
shortly. 

An equivalent way to present the mirror descent update is the following:
\begin{empheq}[box=\widefbox]{align}
  x_{t+1} &= \arg\min_{x \in K} \bigg\{ \ip{ \eta_t \gr f_t(x_t), x - x_t } +
  \hdiv{x}{x_t} \bigg\}. \label{eq:md-prox}
\end{empheq}
This is generalization of~(\ref{eq:gd-proxform}). The equivalence is
easy to see in the unconstrained case (just take derivatives), for
the constrained case one uses the KKT conditions.

\subsubsection{Bregman Divergences}

Given a strictly convex function $h: \R^d \to \R$, define the
\emph{Bregman divergence}
\[ \hdiv{y}{x} := h(y) - h(x) - \ip{\gr h(x), y-x} \] to be the
``error'' at $y$ between the actual function value and the value given
by linearization at some point $x$.  The convexity of $h$ means this
quantity is non-negative; if $h$ is $\beta$-strongly convex with respect
to the norm $\norm{\cdot}$, then
$\hdiv{y}{x} \geq \frac{\beta}{2} \norm{y-x}^2$. Also, $\hdiv{y}{x}$ is
a convex function of $y$ (for a fixed $x$, this is a convex function
minus a linear term), and the gradient of the divergence with respect to
the first argument is $\gr_y (\hdiv{y}{x}) = \gr h(y) - \gr
h(x)$.

For example, the function $h(x) := \frac12 \norm{x}_2^2$ is $1$-strongly
convex with respect to $\ell_2$ (and hence strictly convex), and the
associated Bregman divergence $\hdiv{y}{x} = \frac12 \norm{y - x}_2^2$,
half the squared $\ell_2$ distance. This distance is not a metric, since
it does not satisfy
the triangle inequality. Or consider the \emph{negative entropy
  function} $h(x):= \sum_i x_i \ln x_i$ defined on the probability
simplex $\triangle_n := \{ x\in [0,1]^n \mid \sum_i x_i=1\}$. For
$x, y \in \triangle_n$, the associated Bregman divergence $\hdiv{y}{x}$
is $\sum_i y_i \ln \nicefrac{y_i}{x_i}$, the relative entropy or \emph{Kullback-Leibler (KL)
  divergence}  from $x$ to $y$ .  This distance is not even symmetric in
$x$ and $y$.

\paragraph{Bregman projection.} Given a convex body $K$ and a strictly
convex function $h$, we define the \emph{Bregman projection} of a point
$x'$ on $K$ as
\[\Pi^h_K(x') = \argmin_{x \in K} \hdiv{x}{x'}. \]
If $x'\in K$, then $\Pi^h_K(x')=x'$ because $\hdiv{x'}{x'}=0$. For
$h(x) = \frac12 \norm{x}^2$, this corresponds to the usual Euclidean
projection.  A very useful feature of Bregman projections is that they
satisfy a ``Pythagorean inequality'' with respect to the divergence,
analogous to Fact~\ref{fct:pyth}.

\begin{proposition}[Generalized Pythagorean Property]
  \label{fct:gpyth}
  Given a convex body $K \sse \R^n$, let $a \in K$ and $b' \in \R^n$. Let
  $b = \Pi^h_K(b')$. Then
  \[ \ip{\gr h(b') - \gr h(b),a - b} \leq 0. \]  In
  particular, 
  \begin{gather}
    \hdiv{a}{b'} \geq \hdiv{a}{b} + \hdiv{b}{b'}, \label{eq:gen-pyth}
  \end{gather}
  and hence $\hdiv{a}{b} \leq \hdiv{a}{b'}$.
\end{proposition}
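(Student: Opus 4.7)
The plan is to deduce the first inequality from the first-order optimality condition for the Bregman projection, and then derive the generalized Pythagorean inequality as a direct algebraic consequence.

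First I would observe that $b = \Pi^h_K(b')$ is the minimizer of the convex function $g(x) := \hdiv{x}{b'}$ over the convex set $K$. Its gradient with respect to $x$ is $\gr g(x) = \gr h(x) - \gr h(b')$. Since $K$ is convex and $a \in K$, the point $b + \lambda(a - b) = (1-\lambda)b + \lambda a$ lies in $K$ for every $\lambda \in [0,1]$. Optimality of $b$ then forces the directional derivative of $g$ at $b$ in the direction $a - b$ to be non-negative, giving $\ip{\gr h(b) - \gr h(b'),\, a - b} \geq 0$, which is exactly the claimed $\ip{\gr h(b') - \gr h(b),\, a - b} \leq 0$.

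Next I would prove the divergence inequality~\eqref{eq:gen-pyth} by a direct expansion. Writing out each of $\hdiv{a}{b'}$, $\hdiv{a}{b}$, and $\hdiv{b}{b'}$ via the definition $\hdiv{y}{x} = h(y) - h(x) - \ip{\gr h(x), y - x}$, the $h(\cdot)$ terms cancel cleanly and the inner-product terms collapse to
\[
\hdiv{a}{b'} - \hdiv{a}{b} - \hdiv{b}{b'} = \ip{\gr h(b') - \gr h(b),\, b - a},
\]
which is non-negative by the first part. Finally, since $\hdiv{b}{b'} \geq 0$ (Bregman divergences are always non-negative, following from convexity of $h$), the inequality $\hdiv{a}{b} \leq \hdiv{a}{b'}$ falls out immediately.

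I do not expect a major obstacle here; the proof is essentially the same structure as the Euclidean Pythagorean property in Proposition~\ref{fct:pyth}, with the role of the inner product $\ip{\cdot,\cdot}$ played by the Bregman divergence and the separating-hyperplane argument replaced by first-order optimality in the dual variable $\gr h$. The only mild subtlety is justifying that the minimization of $\hdiv{x}{b'}$ over $K$ indeed yields the stated first-order inequality for all $a \in K$ (and not merely for interior directions), but this is standard for constrained convex minimization over a convex set and requires only that $K$ be convex, which is given.
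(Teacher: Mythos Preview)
Your proposal is correct and follows essentially the same approach as the paper: both derive the first inequality from the first-order optimality condition for the convex minimization $b = \argmin_{x \in K} \hdiv{x}{b'}$, then expand the three divergences to reduce~\eqref{eq:gen-pyth} to that same inner-product inequality, and finally invoke non-negativity of $\hdiv{b}{b'}$.
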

\begin{proof}
  Recall that for any convex function $g$ and convex body $K$, if
  $x^* = \argmin_{x\in K} g(x)$ is the minimizer of $g$ in $K$, then
  $\ip{\gr g(x^*), y-x^*} \geq 0$ for all $y\in K$.  Using
  $g(x) = \hdiv{x}{b'}$, and noting that $g(x)$ is convex with
  $\gr g(x) = \gr h(x) - \gr h(b')$ and that the minimizer $x^*= b$,
  we get $\ip{ \gr h(b) - \gr h(b'), a-b } \geq 0$ for all $a \in K$.
	
  For the second part, expand the terms using the definition of
  $\hdiv{a}{b}$ and cancel the common terms, the desired inequality
  turns out to be equivalent to
  $\ip{\gr h(b') - \gr h(b) , a-b } \leq 0$. The last inequality uses
  that the divergences are non-negative.
\end{proof}

\subsubsection{The Analysis}

We consider the more general \emph{online} optimization setting, and prove the following regret bound.

\begin{theorem}
  \label{thm:mirror}
  Let $K$ be a convex body, $f_1,\ldots,f_T$ be convex functions defined
  on $K$, $\|\cdot\|$ be a norm, and $h$ be an $\alpha_h$-strongly convex
  function with respect to $\|\cdot\|$. The mirror descent algorithm
  starting at $x_0$ and taking constant step size $\eta_t = \eta$ in
  every iteration, produces $x_1,\ldots,x_T$ such that
  \begin{equation}\label{eq:main}
    \sum_{t=1}^{T}f_t(x_t) - \sum_{t=1}^n f_t(x^*) \leq 
    \frac{\hdiv{x^*}{x_0}}{\eta}+ \frac{\eta \sum_{t=1}^T{\|\gr
        f_t(x_t)\|^2_*}}{2\alpha_h} \quad\text{ , for all $x^* \in K$} 
  \end{equation} 
\end{theorem}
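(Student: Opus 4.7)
The plan is to follow the exact amortized-analysis template used in \S\ref{sec:online}: take the Bregman-divergence analogue of the squared-distance potential~\eqref{eq:pot}, namely
\[
\Phi_t \;=\; \frac{1}{\eta}\,\hdiv{x^*}{x_t},
\]
and show the per-step bound
\[
f_t(x_t) - f_t(x^*) + \Phi_{t+1} - \Phi_t \;\leq\; \frac{\eta}{2\alpha_h}\,\|\gr f_t(x_t)\|_*^2.
\]
Summing over $t$ and using $\Phi_T \geq 0$ (divergences are nonnegative) then yields~\eqref{eq:main} with $\Phi_0 = \hdiv{x^*}{x_0}/\eta$.

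The potential change naturally splits into two pieces matching the two-step update~\eqref{eq:md}: the mirror step $x_t \to x'_{t+1}$ and the Bregman projection $x'_{t+1} \to x_{t+1}$. For the projection, Proposition~\ref{fct:gpyth} immediately gives $\hdiv{x^*}{x_{t+1}} \leq \hdiv{x^*}{x'_{t+1}}$, so projection only helps, and it suffices to bound $\hdiv{x^*}{x'_{t+1}} - \hdiv{x^*}{x_t}$. For the mirror step I would expand both divergences using $\hdiv{u}{v} = h(u) - h(v) - \ip{\gr h(v), u-v}$; the $h(x^*)$ terms cancel, and substituting $\gr h(x'_{t+1}) = \gr h(x_t) - \eta\,\gr f_t(x_t)$ from~\eqref{eq:md} gives, after rearrangement, the ``three-point'' identity
\[
\hdiv{x^*}{x'_{t+1}} - \hdiv{x^*}{x_t} \;=\; \eta\,\ip{\gr f_t(x_t),\, x^* - x'_{t+1}} \;-\; \hdiv{x'_{t+1}}{x_t}.
\]

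To finish, I would split the inner product as $\ip{\gr f_t(x_t), x^* - x_t} + \ip{\gr f_t(x_t), x_t - x'_{t+1}}$; by convexity the first term is at most $f_t(x^*) - f_t(x_t)$, which provides exactly the ``amortized cost'' contribution. The main technical step, and where I expect the delicacy to lie, is handling the remaining term $\eta\,\ip{\gr f_t(x_t), x_t - x'_{t+1}}$ and paying for it using the negative $-\hdiv{x'_{t+1}}{x_t}$ produced by the three-point identity. The plan is to apply generalized Cauchy--Schwarz~\eqref{eq:gen-cs} followed by Young's inequality $ab \leq \frac{a^2}{2\alpha_h} + \frac{\alpha_h}{2} b^2$ to obtain
\[
\eta\,\ip{\gr f_t(x_t), x_t - x'_{t+1}} \;\leq\; \frac{\eta^2}{2\alpha_h}\|\gr f_t(x_t)\|_*^2 + \frac{\alpha_h}{2}\|x_t - x'_{t+1}\|^2,
\]
and then invoke $\alpha_h$-strong convexity of $h$ to absorb $\frac{\alpha_h}{2}\|x_t - x'_{t+1}\|^2 \leq \hdiv{x'_{t+1}}{x_t}$. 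The constants line up so the distance-in-$E$ term cancels exactly, leaving the desired amortized bound; dividing by $\eta$ and summing delivers~\eqref{eq:main}. This mirrors the Euclidean proof of Theorem~\ref{thm:gd-basic-regret} almost line for line, with Young's inequality and strong convexity of $h$ playing the role that $\|a+b\|^2 = \|a\|^2 + 2\ip{a,b} + \|b\|^2$ played there.
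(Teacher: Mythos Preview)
Your proposal is correct and follows essentially the same approach as the paper: the same potential $\Phi_t = \hdiv{x^*}{x_t}/\eta$, the Pythagorean property for the projection step, an expansion of $\hdiv{x^*}{x'_{t+1}} - \hdiv{x^*}{x_t}$, strong convexity of $h$ to produce $-\tfrac{\alpha_h}{2}\|x'_{t+1}-x_t\|^2$, and generalized Cauchy--Schwarz plus Young to absorb the cross term. The only cosmetic difference is that you package the expansion as the three-point identity $\hdiv{x^*}{x'_{t+1}} - \hdiv{x^*}{x_t} = \eta\ip{\gr_t, x^* - x'_{t+1}} - \hdiv{x'_{t+1}}{x_t}$, whereas the paper writes out the same computation line by line (its term $h(x_t)-h(x'_{t+1})-\ip{\theta_t,x_t-x'_{t+1}}$ is exactly your $-\hdiv{x'_{t+1}}{x_t}$).
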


\begin{proof}
  Define the potential
  \begin{gather}
    \boxed{ \Phi_t = \frac{\hdiv{x^*}{x_t}}{\eta}. }
  \end{gather}
  Observe that plugging in $h(x) = \frac12 \norm{x}_2^2$ gives us the potential
  function~(\ref{eq:pot}) for the Euclidean norm.

  \textbf{The Potential Change:} For brevity, use $\gr_t := \gr f_t(x_t)$.
  \begin{align}
     \hdiv{x^*}{x_{t+1}} &- \hdiv{x^*}{x_t} \notag \\
    &\leq \hdiv{x^*}{x_{t+1}'}- \hdiv{x^*}{x_t}
      \tag{generalized Pythagorean ppty.} \notag\\
    &= h(x^*)-h(x_{t+1}')-\langle\underbrace{\gr
      h(x_{t+1}')}_{\theta'_{t+1}},x^*-x_{t+1}'\rangle -h(x^*)+h(x_t)+
      \langle\underbrace{ \gr h(x_{t})}_{\theta_t},x^*-x_{t}\rangle
      \notag \\ 
    &= h(x_t)-h(x_{t+1}')-\langle \theta'_{t+1},x_t-x_{t+1}'\rangle - 
      \langle \theta'_{t+1} - \theta_t,x^*-x_{t}\rangle
      \notag \\
    &= \underbrace{h(x_t)-h(x_{t+1}') -\langle \theta_{t},x_t-x_{t+1}'\rangle }_{\textrm{strong convexity}}
      +\langle \eta_t \gr_t ,x_t-x_{t+1}'\rangle +
      \langle \eta_t \gr_t,x^*-x_{t}\rangle
      \notag  \\
    &\leq - \frac{\alpha_h}{2} \|x_{t+1}'-x_t\|^2 + \eta_t \langle \gr_t,
      x_t -x_{t+1}'\rangle + \eta_t \langle \gr_t,
      x^*-x_{t}\rangle \notag\\
      &\leq \frac{\eta_t^2}{2\alpha_h} \norm{ \gr_t }_*^2 +
        \eta_t \langle \gr_t,
      x^*-x_{t}\rangle. \label{eq:pot-change-md}
  \end{align}
  The last inequality uses generalized Cauchy-Schwarz to get
  $\ip{a, b} \leq \|b\| \|a\|_* \leq \|b\|^2/2 + \|a\|_*^2/2$.  Observe
  that~(\ref{eq:pot-change-md}) precisely maps
  to~(\ref{eq:potchange-basic}) when we consider the Euclidean norm.

  \textbf{The Amortized Cost:} Recall that we set $\eta_t = \eta$ for
  all steps. Hence, dividing~(\ref{eq:pot-change-md}) and substituting,
  \begin{align*}
    f_t(x_t)-f_t(x^*)+(\Phi_{t+1}-\Phi_t )
    &\leq \underbrace{{f_t(x_t)-f_t(x^*)}+ \langle
      \gr_t,x^*-x_t\rangle}_{\text{$\leq 0$ by convexity of $f_t$}}
    + \frac{\eta}{2\alpha_h} \norm{\gr_t}_*^2.
  \end{align*}
  The total regret then becomes
  \begin{gather*}
    \sum_{t} (f_t(x_t)- f_t(x^*)) 
    \leq \Phi_0 +\sum_t \frac{\eta}{2\alpha_h}\|\gr_t\|_*^2
    \leq 
    \frac{\hdiv{x^*}{x_0}}{\eta}+\frac{\eta\sum_{t=1}^{T}\|\gr_t\|_*^2}{2\alpha_h}. 
  \end{gather*}
  Hence the proof.
\end{proof}

\subsubsection{Special Cases}

To get some intuition, let us look at some well-known special cases.
If we use the $\ell_2$ norm, and $h(x) := \frac12 \norm{x}_2^2$ which is
clearly $1$-strongly convex with respect to $\ell_2$, the associated
Bregman divergence $\hdiv{x^*}{x} = \frac12 \norm{x^* -
  x}_2^2$. Moreover, the Euclidean norm is self-dual, so if we bound
$\norm{ \gr f_t}_2$ by $G$, the total regret bound above is
$\frac{1}{2\eta} \norm{ x^* - x_0 }_2^2 + \eta T G^2/2$. This is the
same result for projected gradient descent we derived in
Theorem~\ref{thm:gd-basic-regret}---and in fact the algorithm is also
precisely the same.

Now consider the $\ell_1$ norm, with $K$ being the probability simplex
$\triangle_n := \{ x \in [0,1]^n \mid \sum_i x_i = 1\}$. If we choose
the negative entropy function $h(x) := \sum_i x_i \ln x_i$, then
$\hdiv{x^*}{x}$ is just the well-known Kullback-Liebler
divergence. Moreover, Pinsker's inequality says that
$\KLdiv{p}{q} \geq \norm{p-q}_1^2$, which implies that $h$ is
$1$-strongly convex with respect to $\ell_1$. Applying Theorem \ref{thm:mirror} now gives a regret bound of
\[ \frac{\KLdiv{x^*}{x_0}}{\eta} + \frac{\eta}{2} \sum_t
  \norm{\gr_t}_\infty^2. \] Let's also see what the mirror-descent
algorithm does in this case. The mirror map takes the point $x$ to
$\gr h(x) = (1 + \log x_i)_i$, and the inverse map takes $\theta$ to
$\gr h^*(\theta) = (e^{\theta_i - 1})_i$. This point may be outside the
probability simplex, so we do a Bregman projection, which in this case
corresponds to just a rescaling $x \mapsto \nicefrac{x}{\norm{x}_1}$.
Unrolling the process, one can get a closed-form expression for the
point $x_T$:
\[ (x_T)_i = \frac{(x_0)_i \cdot \exp\{\sum_t (\gr_t)_i\}}{\sum_j
    (x_0)_j \cdot \exp\{\sum_t (\gr_t)_j\}}. \] 

E.g., if we specialize even further to online \emph{linear}
optimization, where each function $f_t(x) = \ip{\ell_t, x}$ for some
$\ell_t \in [0,1]^n$, the gradient is $\ell_t$ and its
$\ell_\infty$-norm is $\| \ell_t \|_\infty \leq 1$, giving us the
familiar regret bound of
$ \frac{\KLdiv{x^*}{x_0}}{\eta} + \frac{\eta T}{2}$ that we get from the
multiplicative weights/Hedge algorithms.  Which is not surprising, since
this algorithm is precisely the Hedge algorithm!

\subsection{An Aside: Smooth Functions and General Norms}
\label{sec:general-norms}

Let us consider minimizing a function that is smooth with respect to
non-Euclidean norms, in the unconstrained case.
When we consider an arbitrary norm $\norm{\cdot}$, the definition of a
smooth function~(\ref{eq:smooth}) extends seamlessly. Now we can define
an update rule by naturally extending~(\ref{eq:gd-proxform}):
\begin{gather}
  \boxed{x_{t+1} \gets \arg\min_{x} \Big\{ \frac{1}{2} \norm{ x -
      x_t }^2 + \eta_t \ip{ \gr_t, x - x_t }
    \Big\}}, \label{eq:gd-smooth-update2}
\end{gather}
where the norm is no longer the Euclidean norm, but the norm in
question. To evaluate the minimum on the right side, we can use basic
Fenchel duality: given a function $g$, its Fenchel dual is defined as
\[ g^\star(\theta) := \max_z \{ \ip{\theta, z} - g(z) \}. \] If we
define $g(z) = \frac12 \norm{z}^2$, it is known that
$g^\star(\theta) = \frac12 \norm{z}_\star^2$ (see~\cite[Example~3.27]{BV}). Hence
\begin{align}
 \min_x \Big\{ \frac{1}{2} \norm{ x - x_t }^2 + \eta_t \ip{ \gr_t, x - x_t
  } \Big\} &= - \max_x \Big\{ \eta_t \ip{ \gr_t, x_t - x
             } - \frac{1}{2} \norm{ x_t - x }^2 \Big\} \notag \\
  &= - \max_z \Big\{ \ip{ \eta_t \gr_t, z
             } - \frac{1}{2} \norm{ z }^2 \Big\} = - \frac{1}{2} \norm{
    \eta_t \gr_t }_\star^2 \label{eq:24}
\end{align}
If a function $f$ is $\beta$-smooth with respect to the norm, then
setting $\eta_t = \nicefrac1\beta$ gives:
\begin{align*}
  f(x_{t+1}) &\stackrel{\text{~(\ref{eq:smooth})}}{\leq} f(x_t) + \ip{ \gr_t, x_{t+1} - x_t} + \frac\beta2
               \norm{x_{t+1} - x_t}^2 \\
             &= f(x_t) + \beta \Big( \ip{ \eta_t \gr_t, x_{t+1} - x_t} + \frac12
               \norm{x_{t+1} - x_t}^2 \Big)
               = f(x_t) + \beta \cdot \Big( - \frac{1}{2} \norm{
               \eta_t \gr_t }_\star^2\Big),
\end{align*}
where the last equality uses that $x_{t+1}$ is the minimizer of the
expression in~(\ref{eq:24}). Summarizing, we get
\begin{gather}
  f(x_{t+1}) \leq f(x_t) - \frac{1}{2\beta} \norm{ \gr_t }_\star^2. \label{eq:sm-step2}
\end{gather}
This is analogous to the expression~(\ref{eq:gd-smooth-update}). Now we
can continue the proof as in \S\ref{sec:eucl-smooth}, again defining
$D := \max\{ \|x - x^*\| \mid f(x) \leq f(x_0)\}$, and using the
generalized Cauchy-Schwarz inequality to get the general-norm analog of
Theorem~\ref{thm:smoothII}:
\begin{theorem}[GD: Smooth Functions for General Norms]
  \label{thm:smooth-GD}
  Given a function $f$ that is $\beta$-smooth with respect to the norm
  $\norm{\cdot}$, the update rule~(\ref{eq:gd-smooth-update2}) ensures
  \begin{gather}
    f(x_T) - f(x^*) \leq \beta \; \frac{2D^2}{T+1}. \notag
  \end{gather}
\end{theorem}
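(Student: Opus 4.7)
The plan is to mimic the potential-function argument of Theorem~\ref{thm:smoothII}, but with the Euclidean norm and inequality~(\ref{eq:sm-step}) replaced by their general-norm counterparts, namely the dual norm $\norm{\cdot}_\star$ and the descent inequality~(\ref{eq:sm-step2}) that was just derived. Concretely, I would take the potential
\[ \Phi_t = t(t+1)\cdot (f(x_t) - f(x^\star)). \]

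First I would expand $\Phi_{t+1} - \Phi_t$ exactly as in Theorem~\ref{thm:smoothII}, obtaining
\[ \Phi_{t+1} - \Phi_t = (t+1)(t+2)\,(f(x_{t+1}) - f(x_t)) + 2(t+1)\,(f(x_t) - f(x^\star)). \]
Then I would bound the two terms by the two natural estimates: the first via the general-norm smoothness bound~(\ref{eq:sm-step2}), which gives
\[ f(x_{t+1}) - f(x_t) \leq -\tfrac{1}{2\beta}\,\norm{\gr_t}_\star^2, \]
and the second via convexity followed by the generalized Cauchy--Schwarz inequality~(\ref{eq:gen-cs}):
\[ f(x_t) - f(x^\star) \leq \ip{\gr_t, x_t - x^\star} \leq \norm{\gr_t}_\star \cdot \norm{x_t - x^\star} \leq \norm{\gr_t}_\star \cdot D. \]
Here I am using that~(\ref{eq:sm-step2}) guarantees $f(x_t) \leq f(x_0)$ for all $t$, so $x_t$ stays in the sublevel set $\{x : f(x) \leq f(x_0)\}$ and hence $\norm{x_t - x^\star} \leq D$.

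Combining these two estimates gives
\[ \Phi_{t+1} - \Phi_t \leq -\tfrac{(t+1)(t+2)}{2\beta}\,\norm{\gr_t}_\star^2 + 2(t+1)\,\norm{\gr_t}_\star\, D, \]
which is a concave quadratic in the scalar quantity $\norm{\gr_t}_\star$; maximizing over this scalar (exactly as in Theorem~\ref{thm:smoothII}, with $\norm{\gr_t}$ replaced by $\norm{\gr_t}_\star$) yields a maximum of $\frac{2D^2 \beta (t+1)}{t+2} \leq 2D^2\beta$. Telescoping, $\Phi_T \leq 2D^2 \beta T$, and dividing by $T(T+1)$ gives the claimed bound $f(x_T) - f(x^\star) \leq \frac{2\beta D^2}{T+1}$.

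There is no real obstacle here: the only nontrivial ingredient is already established, namely the general-norm descent guarantee~(\ref{eq:sm-step2}) derived via Fenchel duality in~(\ref{eq:24}). The only place one must be a bit careful is the Cauchy--Schwarz step, where the Euclidean self-duality used in~(\ref{eq:2}) must be replaced by the primal/dual pairing $\ip{\gr_t, x_t - x^\star} \leq \norm{\gr_t}_\star \norm{x_t - x^\star}$; once this substitution is made, the arithmetic is identical to Theorem~\ref{thm:smoothII}.
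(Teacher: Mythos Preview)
Your proposal is correct and follows precisely the route the paper intends: it replicates the potential argument of Theorem~\ref{thm:smoothII} with the Euclidean descent step~(\ref{eq:sm-step}) replaced by the general-norm version~(\ref{eq:sm-step2}), and with the Euclidean Cauchy--Schwarz replaced by the generalized inequality~(\ref{eq:gen-cs}). This is exactly what the paper sketches in the sentence preceding the theorem statement, so there is nothing to add.
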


\newpage

\section{Nesterov Acceleration: A Potential Function Proof}
\label{sec:nesterov}

In \S\ref{sec:smooth}, we proved a convergence rate of $O(1/T)$
for smooth functions, using both projected gradient descent and the
Frank-Wolfe method. But the lower bound is only $\Omega(1/T^2)$. In this
case, the algorithm can be improved: Yurii Nesterov showed how to do it
using his \emph{accelerated gradient descent} methods \cite{Nest83}. Recently there
has been much interest in gaining a deeper understanding of this
process, with proofs using ``momentum'' methods and continuous-time
updates~\cite{Wibi,SBC,KBB15,WilsonRJ16,DO17}.

Let us now see potential-based proofs for his theorem, both for the
smooth case, and for the well-conditioned case.  We consider only the
unconstrained case (i.e., when $K = \R^n$) and the Euclidean norm; the
extension to general norms is sketched in \S\ref{sec:nesterov-norms}.

\subsection{An Illustrative Failed Attempt}
\label{sec:nest-failed}

One way to motivate Nesterov's accelerated algorithm is to revisit the
proof for smooth functions in \S\ref{sec:takeIII}. Let us recall
the essential facts.  The potential was
\[ \boxed{ \Phi_t = t\, (f(x_t) - f(x^*)) + \frac{a}{2} \|x_t - x^*\|^2
  } \] for some $a > 0$. Hence the potential difference was:
\begin{align*}
  \Phi_{t+1} - \Phi_t & =   (t+1) \underbrace{(f(x_{t+1}) -
                        f(x_t))}_{(\ref{eq:sm-step})} + \underbrace{f(x_t) -
                        f(x^*)}_{\text{(convexity)}} + \frac{a}{2}\, \underbrace{(\norm{x_{t+1} - x^*}^2 -
                        \norm{x_t - x^*}^2
                        )}_{(\ref{eq:potchange-basic})}. \label{eq:18} \\
  \leq\ & (t+1) \cdot \overbrace{\ts -\frac{1}{2\beta} \norm{\gr_t}_2^2} +
          \overbrace{\ip{\gr_t,x_t - x^*}}  + a \big( \overbrace{\ts
          \eta_t \ip{\gr_t, x^* -
          x_t } + \frac{\eta_t^2}{2} \norm{\gr_t}_2^2} \big) = -\nicefrac{t}{2\beta} \norm{\gr_t}^2
          \leq 0. 
\end{align*}
In that last expression we set $\eta_t = \nicefrac{1}{\beta}$ and
$a = 1/\eta_t = \beta$ to cancel the inner-product terms.

Observe that the potential may be decreasing considerably, by $-\nicefrac{t}{2\beta} \norm{\gr_t}^2$, but we
are ignoring this large decrease. If we want to a show an $O(1/t^2)$ rate of convergence, a first (incorrect) attempt to get a
better analysis would be to try to apply the analysis above with the potential changed to
\[ \Phi_t = \balert{t(t+1)}\, (f(x_t) - f(x^*)) + \frac{a}{2} \|x_t - x^*\|^2. \]
In particular note the factor $a_t=t(t+1)$ instead of $(t+1)$ above. 

At first glance, the potential change $\Phi_{t+1}- \Phi_t$ would be
\begin{gather}
   \balert{(t+2)(t+1)} \cdot \ts \underbrace{(-\frac{1}{2\beta}
  \norm{\gr_t}_2^2)}_{\text{(\ref{eq:sm-step})}}) + \balert{2(t+1)} \cdot \ip{\gr_t,x_t - x^*} + a \big( \ts
  \eta_t \ip{\gr_t, x^* - x_t } + \frac{\eta_t^2}{2} \norm{\gr_t}_2^2
  \big) \label{eq:25}
\end{gather}
Now if we change the step length $\eta_t$ from $1/\beta$ to something
more aggressive, say $\eta_t = \frac{t+1}{2\beta}$, and choose $a =
4\beta$, the inner-product terms cancel, and the potential reduction
seems to be at most
\[ \norm{\gr_t}^2 \bigg( - \frac{(t+1)(t+2)}{2\beta} + \frac{(t+1)^2}{2\beta} \bigg)
\leq 0. \] This would seem to give us an $O(1/T^2)$ convergence, with
the standard update rule.

So where's the mistake? It's in our erroneous use of~(\ref{eq:sm-step})
in~(\ref{eq:25}) --- we used the cautious update with $\eta_t =
\nicefrac1\beta$ to get the first term of $-\frac{1}{2\beta}
\norm{\gr_t}_2^2$, but the aggressive update with $\eta_t =
\frac{t+1}{2\beta}$ elsewhere. To fix this, how about running \emph{two}
processes, one cautious and one aggressive, and then combining them
together linearly (with decreasing weight on the aggressive term) to get
the new point $x_t$? This is precisely what Nesterov's Accelerated
Method does; let's see it now. (This is another way to arrive at the
elegant linear-coupling view that Allen-Zhu and Orecchia present
in~\cite{AO17}.)

\subsection{Getting the Acceleration}
\label{sec:nest-proof}

As we just sketched, one way to view the accelerated gradient descent
method is to run two iterative processes for $y_t$ and $z_t$, and then
combine them together to get the actual point $x_t$. The proof is almost
the same as before.

\textbf{The Update Steps:} Start with $x_0= y_0 = z_0$. At time
$t$, play $x_t$. For brevity, define $\gr_t := \gr f(x_t)$. Now consider
the update rules, where the color is used to emphasize the subtle
differences (in particular, $z$ is updated by the gradient at $x$ in \eqref{eq:3}):
\begin{empheq}[box=\widefbox]{gather}
  \textstyle y_{t+1} \gets \balert{x_t} - \frac{1}{\beta} \gr f(\balert{x_t}) \label{eq:4} \\
  z_{t+1} \gets \balert{z_t} - \eta_t \gr f(\balert{x_t}) \label{eq:3} \\
  x_{t+1} \gets (1 - \tau_{t+1}) y_{t+1} + \tau_{t+1} z_{t+1}. \label{eq:5}
\end{empheq}
In~(\ref{eq:3}), we will choose the ``aggressive'' step size
$\eta_t = \smash{\frac{t+1}{2\beta}}$ as we did in the above failed
attempt. In~(\ref{eq:5}) the mixing weight is
$\tau_t = \smash{\frac{2}{t+2}}$, but this will
arise organically below. 

\textbf{The Potential:} This is the same  one from the failed attempt:
\begin{gather}
  \boxed{ \Phi(t) = t(t+1) \cdot ( f(\balert{y_t} ) - f(x^*)) + 2\beta \cdot \| \balert{z_t}
  - x^*\|^2} \label{eq:8}
\end{gather}

\medskip
\textbf{The Potential Change:}
Define $\Delta \Phi_t = \Phi(t+1) - \Phi(t)$. By the standard GD
analysis in~(\ref{eq:potchange-basic}), 
\begin{gather}
 \frac12  (\| z_{t+1} - x^*\|^2 - \| z_{t} - x^*\|^2) =
 \frac{\eta_t^2}{2} \| \gr_t \|^2  +  \eta_t \ip{ \gr_t, x^* -z_t }
 \label{eq:nest-gd} \\
\implies
  \Delta \Phi_t = t(t+1) \cdot( f(\balert{y_{t+1}}) - f(y_t) ) +
  2(t+1) \cdot (f(\balert{y_{t+1}}) - f(x^*)) + 4\beta\bigg( \frac{\eta_t^2}{2} \| \gr_t
  \|^2 + \eta_t \ip{ \gr_t, x^* - z_t } \bigg) \notag 
\end{gather}
By smoothness and the update rule for $y_{t+1}$, (\ref{eq:sm-step}) implies
$f(\balert{y_{t+1}}) \leq f(\balert{x_t}) - \frac{1}{2\beta}
\norm{\gr_t}^2$. Substituting, and dropping the resulting (negative) squared-norm term,
\begin{align*}
  \Delta \Phi_t &\leq t(t+1) \cdot( f(x_t) - f(y_t) ) + 2(t+1) \cdot
  (f(x_t) - f(x^*)) + 4\beta
  \eta_t \ip{ \gr_t, x^* - z_t } \notag \\
  &\leq t(t+1) \cdot \ip{\gr_t, x_t - y_t} + 2(t+1) \cdot
  \ip{\gr_t,x_t - x^*} + 2(t+1)\cdot \ip{ \gr_t, x^* - z_t } \notag
\end{align*}
using convexity for the first two terms, and  $\eta_t =
\frac{t+1}{2\beta}$ for the last one. Collecting like terms, 
\begin{gather}
  \Delta \Phi_t \leq (t+1) \cdot \ip{\gr_t, (t+2)x_t - ty_t - 2z_t} = 0,
\end{gather}
by using~(\ref{eq:5}) and $\tau_t = \frac{2}{t+2}$.
Hence $ \Phi_{t} \leq \Phi_{0}$ for all $t \geq 0$. This proves:

\begin{theorem}[Accelerated GD]
  \label{thm:nest-agm2}
  Given a $\beta$-smooth function $f$, the update rules~(\ref{eq:4})-(\ref{eq:5})
  ensure
  \begin{gather}
    f(y_t) - f(x^*) \leq 2 \beta \; \frac{\norm{z_0 - x^*}^2}{t(t+1)}. \notag
  \end{gather}
\end{theorem}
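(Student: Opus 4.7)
The plan is to deploy the potential $\Phi(t) = t(t+1)(f(y_t) - f(x^*)) + 2\beta\|z_t - x^*\|^2$ from (\ref{eq:8}) and argue that $\Phi_{t+1} \leq \Phi_t$ for every $t \geq 0$. Once this monotonicity is in hand, telescoping gives $\Phi_T \leq \Phi_0$; since $y_0 = x_0 = z_0$, the value-based term vanishes at $t=0$, so $\Phi_0 = 2\beta\|z_0 - x^*\|^2$. Dropping the non-negative distance term from $\Phi_T$ then yields $T(T+1)(f(y_T) - f(x^*)) \leq 2\beta\|z_0 - x^*\|^2$, which is the stated bound.

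To show $\Delta\Phi_t := \Phi_{t+1} - \Phi_t \leq 0$, I split the increment into its value part and its distance part. The value part rearranges as $t(t+1)(f(y_{t+1}) - f(y_t)) + 2(t+1)(f(y_{t+1}) - f(x^*))$; the distance part, using the $z$-update (\ref{eq:3}) and the identity (\ref{eq:potchange-basic}), rewrites as $4\beta\bigl(\tfrac{\eta_t^2}{2}\|\gr_t\|^2 + \eta_t\langle\gr_t, x^* - z_t\rangle\bigr)$. Next I apply smoothness to the $y$-update (\ref{eq:4}), invoking (\ref{eq:sm-step}) to get $f(y_{t+1}) \leq f(x_t) - \tfrac{1}{2\beta}\|\gr_t\|^2$; this turns both $f(y_{t+1}) - f(y_t)$ and $f(y_{t+1}) - f(x^*)$ into quantities of the form $(f(x_t) - f(\cdot)) - \tfrac{1}{2\beta}\|\gr_t\|^2$. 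Finally, convexity of $f$ applied at $x_t$ yields $f(x_t) - f(y_t) \leq \langle\gr_t, x_t - y_t\rangle$ and $f(x_t) - f(x^*) \leq \langle\gr_t, x_t - x^*\rangle$.

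The crux is the simultaneous choice of $\eta_t = (t+1)/(2\beta)$ and the mixing weight $\tau_t = 2/(t+2)$. With this $\eta_t$, the coefficient on $\|\gr_t\|^2$ aggregates to $-\tfrac{(t+1)(t+2)}{2\beta} + \tfrac{(t+1)^2}{2\beta} = -\tfrac{t+1}{2\beta} \leq 0$, so the squared-norm contribution can be discarded. What remains collects into a single inner product against $\gr_t$ of the form $(t+1)\langle\gr_t,\,(t+2)x_t - t\,y_t - 2z_t\rangle$; the mixing rule (\ref{eq:5}) with weight $2/(t+2)$ is designed precisely so that $(t+2)x_t = t\,y_t + 2z_t$, making this inner product vanish identically. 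The main obstacle is this coordinated bookkeeping: the cautious step $1/\beta$ hidden in (\ref{eq:4}), the aggressive step $\eta_t$ in (\ref{eq:3}), and the mixing weight $\tau_t$ are tied together, and the convexity lower bounds must be anchored at $x_t$ rather than at $y_t$ or $z_t$. This explains why the algorithm plays $x_t$ at each step, even though $x_t$ is neither iterate that appears directly in the potential.
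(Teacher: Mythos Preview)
Your proof is correct and follows the paper's argument essentially verbatim: same potential $\Phi(t) = t(t+1)(f(y_t) - f(x^*)) + 2\beta\|z_t - x^*\|^2$, same use of~(\ref{eq:potchange-basic}) for the $z$-term, (\ref{eq:sm-step}) for the $y$-update, convexity at $x_t$, and the same cancellation via $\eta_t = (t+1)/(2\beta)$ and $\tau_t = 2/(t+2)$ so that $(t+2)x_t - ty_t - 2z_t = 0$. The only difference is cosmetic: you explicitly compute the net $\|\gr_t\|^2$ coefficient as $-\tfrac{t+1}{2\beta}$, whereas the paper simply says it drops ``the resulting (negative) squared-norm term.''
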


\subsubsection{An Aside: Optimizing Parameters and Other Connections}

Suppose we choose the generic potential
$\Phi(t) = \lambda_{t-1}^2 (f(y_t) - f(x^*)) + \frac{\beta}{2} \norm{z_t
  - x^*}^2$, where $\lambda_t = \Theta(t^2)$, and try to optimize the
calculation above. Having
$\lambda_t^2 - \lambda_{t-1}^2 = \lambda_t$ and
$\tau_t = 1/\lambda_t$ makes the calculations work out very
cleanly. Solving this recurrence leads to the (somewhat exotic-looking) weights
\begin{gather}
  \lambda_0=0,~~~~\lambda_t = \frac{1 + \sqrt{1 + 4\lambda_{t-1}^2}}{2} \label{eq:23}
\end{gather}
used in the
standard exposition of \texttt{AGM2}.

The update rules~(\ref{eq:4})--(\ref{eq:5}) have sometimes been called
\texttt{AGM2} (Nesterov's second accelerated method) in the
literature. A different set of update rules (called \texttt{AGM1}) are
the following: for the optimized choice of $\lambda_t$
from~(\ref{eq:23}), define:
\begin{empheq}[box=\widefbox]{align}
  y_{t+1} &\gets x_t - \frac{1}{\beta} \gr f(x_t) \label{eq:9} \\
  x_{t+1} &\gets \bigg(1- \frac{1-\lambda_t}{\lambda_{t+1}}\bigg) y_{t+1} +
  \frac{1-\lambda_t}{\lambda_{t+1}} y_t \label{eq:10}
\end{empheq}

Let us show the simple equivalence (also found in, e.g.,~\cite{Tseng,DT14,
  KimF16}).

\begin{lemma}
  \label{lem:equiv-FGM2-AZO}
  Using updates (\ref{eq:9}--\ref{eq:10}) and setting
  $z_t := \lambda_t x_t - (\lambda_t -1)y_t = \lambda_{t} (x_{t} -
  y_{t}) + y_t $, and $\tau_t := 1/\lambda_t$ leads to the
  updates~(\ref{eq:4}--\ref{eq:5}).
\end{lemma}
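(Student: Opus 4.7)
Plan of proof. The equivalence is a direct algebraic verification in three steps: take the AGM1 updates \eqref{eq:9}--\eqref{eq:10} as given, introduce the auxiliary sequence $z_t := \lambda_t x_t - (\lambda_t - 1) y_t$, and derive each of the three AGM2 updates \eqref{eq:4}, \eqref{eq:3}, \eqref{eq:5} in turn. There is no ``hard'' step; the main obstacle is just bookkeeping to make sure the coefficient cancellations line up.

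The first and easiest verification is \eqref{eq:4}: the update rule for $y_{t+1}$ is literally identical in the two formulations, so \eqref{eq:9} \emph{is} \eqref{eq:4}, with nothing to prove. For the mixing step \eqref{eq:5}, I would simply solve the defining relation $z_{t+1} = \lambda_{t+1} x_{t+1} - (\lambda_{t+1} - 1) y_{t+1}$ for $x_{t+1}$, which yields $x_{t+1} = \frac{1}{\lambda_{t+1}} z_{t+1} + \bigl(1 - \frac{1}{\lambda_{t+1}}\bigr) y_{t+1}$. Under the substitution $\tau_{t+1} := 1/\lambda_{t+1}$, this is exactly \eqref{eq:5}.

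The only computation of substance is the derivation of the $z$-update \eqref{eq:3}. I would plug the AGM1 mixing rule \eqref{eq:10} for $x_{t+1}$ directly into the definition of $z_{t+1}$. The coefficient of $y_{t+1}$ simplifies to $\lambda_{t+1} - \bigl(1 - \lambda_t\bigr) - (\lambda_{t+1} - 1) = \lambda_t$, and the coefficient of $y_t$ simplifies to $1 - \lambda_t = -(\lambda_t - 1)$, giving the clean intermediate identity $z_{t+1} = \lambda_t y_{t+1} - (\lambda_t - 1) y_t$. Now apply \eqref{eq:9} to substitute $y_{t+1} = x_t - \tfrac{1}{\beta}\gr f(x_t)$; the non-gradient part is precisely $\lambda_t x_t - (\lambda_t - 1) y_t = z_t$, so we are left with $z_{t+1} = z_t - \frac{\lambda_t}{\beta}\gr f(x_t)$, which is \eqref{eq:3} with step size $\eta_t = \lambda_t/\beta$. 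This matches the ``aggressive'' choice $\eta_t = (t+1)/(2\beta)$ precisely when $\lambda_t = (t+1)/2$, consistent with the parameter discussion around \eqref{eq:23}.

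Putting the three pieces together establishes that the AGM1 dynamics on $(x_t, y_t)$, augmented by the defined sequence $z_t$, satisfy exactly the AGM2 update rules \eqref{eq:4}--\eqref{eq:5}, which completes the lemma.
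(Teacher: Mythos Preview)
Your proof is correct and takes essentially the same approach as the paper: both arguments note that the $y$-update is identical, recover \eqref{eq:5} by solving the definition of $z_t$ for $x_t$, and obtain the $z$-update by substituting \eqref{eq:10} and \eqref{eq:9} into the definition of $z_{t+1}$ and simplifying. The paper phrases the last step as computing $z_{t+1}-z_t$ and subtracting a rewritten form of \eqref{eq:10}, whereas you expand $z_{t+1}$ directly and collect coefficients, but this is the same algebra. One small caveat: your aside that $\eta_t=(t+1)/(2\beta)$ corresponds to $\lambda_t=(t+1)/2$ is tangential here, since the lemma is stated for the optimized weights of \eqref{eq:23}; the equivalence itself only needs $\eta_t=\lambda_t/\beta$, which you derive correctly.
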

\begin{proof}
  Clearly $y_t$ is the same as above, so it suffices to show that $z_t$
  and $x_t$ behave identically. Indeed, rewriting the definition of
  $z_t$ and substituting $\tau_t = 1/\lambda_t$ gives
  \begin{align}
    x_t &= (1-\tau_t) y_t + \tau_t z_t. \notag \\
    z_{t+1} - z_{t} &= \left( \lambda_{t+1} (x_{t+1} - y_{t+1}) + y_{t+1}
                      \right) - \left( \lambda_{t} (x_{t} - y_{t}) +
                      y_{t} \right)\label{eq:11} 
  \end{align}
  Moreover, rewriting~(\ref{eq:10}) gives
  \begin{gather}
    \lambda_{t+1} (x_{t+1} - y_{t+1} ) - (1-\lambda_t) (y_t - y_{t+1})
    = 0 \label{eq:12}
  \end{gather}
  Subtracting~(\ref{eq:12}) from (\ref{eq:11}) gives
  \begin{gather}
    z_{t+1} - z_{t} = \lambda_t y_{t+1} - \lambda_{t} x_t =
    -\frac{\lambda_t}{\beta} \gr f(x_t)
  \end{gather}
  Recalling that $\lambda_t = 1/\tau_t = (\beta \eta_t)$, this is
  precisely the update rule $z_{t+1} \gets z_t - \eta_t \gr
  f(x_t)$. This shows the equivalence of the two update rules.
\end{proof}

\subsection{The Constrained Case with Acceleration}

\textbf{The Update Steps:} The update rule is very similar to the one
above, it just involves projecting the points onto the body
$K$. Formally, again we start with $x_0= y_0 = z_0$. At time $t$, play
$x_t$. For brevity, define $\gr_t := \gr f(x_t)$. Now consider the
update rules, where again the color is used to emphasize the subtle
differences:
\begin{empheq}[box=\widefbox]{gather}
  \textstyle y_{t+1} \gets \Pi_K (\balert{x_t} - \frac{1}{\beta} \gr f(\balert{x_t})) \label{eq:4c} \\
  z_{t+1} \gets \Pi_K(\balert{z_t} - \eta_t \gr f(\balert{x_t})) \label{eq:3c} \\
  x_{t+1} \gets (1 - \tau_{t+1}) y_{t+1} + \tau_{t+1} z_{t+1}. \label{eq:5c}
\end{empheq}

We now show that this update rule satisfies same guarantee as in Theorem \ref{thm:nest-agm2} for the unconstrained case.
\begin{theorem}[Accelerated GD]
  \label{thm:nest-agm2-cond}
  Given a $\beta$-smooth function $f$, the update rules~(\ref{eq:4c})-(\ref{eq:5c})
  ensure
  \begin{gather}
    f(y_t) - f(x^*) \leq 2 \beta \; \frac{\norm{z_0 - x^*}^2}{t(t+1)}. \notag
  \end{gather}
\end{theorem}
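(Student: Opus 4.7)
The plan is to mirror the proof of the unconstrained Theorem~\ref{thm:nest-agm2} verbatim, using the same potential
\[
 \Phi(t) = t(t+1)\bigl(f(y_t) - f(x^*)\bigr) + 2\beta\,\norm{z_t - x^*}^2,
\]
and the same schedule $\tau_t = 2/(t+2)$, $\eta_t = (t+1)/(2\beta)$, and establishing $\Phi(t+1) \leq \Phi(t)$.  Since $y_t, z_t \in K$ inductively, their convex combination $x_t = (1-\tau_t)y_t + \tau_t z_t$ also lies in $K$; this is what licenses applying Claim~\ref{clm:proj-magic} at the feasible anchor $y = x_t$ in what follows.

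The unconstrained calculation uses the Euclidean update in exactly two places: the smooth-descent bound~(\ref{eq:sm-step}), namely $f(y_{t+1}) \leq f(x_t) - \frac{1}{2\beta}\norm{\gr_t}^2$, and the exact expansion~(\ref{eq:nest-gd}) of $\norm{z_{t+1}-x^*}^2 - \norm{z_t-x^*}^2$ coming from the linear update $z_{t+1} = z_t - \eta_t\gr_t$.  I would replace the former by Claim~\ref{clm:proj-magic} applied with $y = x_t$, which gives the projected descent $f(y_{t+1}) \leq f(x_t) - \frac{\beta}{2}\norm{y_{t+1}-x_t}^2$, and replace the latter by Proposition~\ref{fct:pyth}, which yields $\norm{z_{t+1}-x^*}^2 \leq \norm{z'_{t+1}-x^*}^2$ and hence the inequality version of~(\ref{eq:nest-gd}).

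With these two substitutions, the algebra from Section~\ref{sec:nest-proof} proceeds in lockstep.  After invoking convexity of $f$ at $x_t$ to write $f(x_t) - f(y_t) \leq \ip{\gr_t, x_t - y_t}$ and $f(x_t) - f(x^*) \leq \ip{\gr_t, x_t - x^*}$, the mixing identity $(t+2)x_t = ty_t + 2z_t$ forces every $\gr_t$-inner-product term to vanish, exactly as in the unconstrained derivation.  What remains is purely quadratic in the gradient and the projected step.  Once the per-step inequality $\Phi(t+1) \leq \Phi(t)$ is in hand, the theorem follows by telescoping, using $\Phi(T) \geq T(T+1)(f(y_T) - f(x^*))$ and $\Phi(0) = 2\beta\norm{z_0 - x^*}^2$.

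The main obstacle, and the only genuinely new point compared with Section~\ref{sec:nest-proof}, is handling these residual squared-norm contributions.  In the unconstrained proof the descent contributes $-\frac{(t+1)(t+2)}{2\beta}\norm{\gr_t}^2$ and the Euclidean expansion contributes $+\frac{(t+1)^2}{2\beta}\norm{\gr_t}^2$, telescoping to a harmless negative term.  After projection the descent instead contributes $-\frac{(t+1)(t+2)\beta}{2}\norm{y_{t+1}-x_t}^2$ while the Pythagorean expansion still contributes $+\frac{(t+1)^2}{2\beta}\norm{\gr_t}^2$, and since projection can strictly shorten the step $\norm{y_{t+1}-x_t}$ below $\norm{\gr_t}/\beta$, the two no longer telescope by a pure identity.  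Closing the argument uses the sharpened Pythagorean relation $\beta\norm{x_t-y_{t+1}}^2 \leq \ip{\gr_t, x_t-y_{t+1}}$, obtained from Proposition~\ref{fct:pyth} applied with $a = x_t \in K$; this controls the ``gradient-mapping'' quantity $\beta(x_t-y_{t+1})$ that naturally emerges from Claim~\ref{clm:proj-magic} and allows the two residual squared-norm terms to be combined into a non-positive expression, as needed.
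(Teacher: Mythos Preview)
Your plan has a genuine gap at exactly the point you flag as ``the main obstacle.'' After the inner-product cancellation, the residual you must control is
\[
R \;=\; -\tfrac{(t+1)(t+2)\beta}{2}\,\norm{y_{t+1}-x_t}^2 \;+\; \tfrac{(t+1)^2}{2\beta}\,\norm{\gr_t}^2,
\]
and this is \emph{not} non-positive in general. Take any instance where the unconstrained step leaves $K$ so that $y_{t+1}=x_t$ (e.g.\ $x_t$ on the boundary with $\gr_t$ pointing strictly outward); then the first term vanishes while the second is strictly positive. Your ``sharpened Pythagorean relation'' $\beta\norm{x_t-y_{t+1}}^2 \le \ip{\gr_t,x_t-y_{t+1}}$ only yields, via Cauchy--Schwarz, $\beta\norm{x_t-y_{t+1}} \le \norm{\gr_t}$, which is the wrong direction: it lower-bounds $\norm{\gr_t}$ in terms of the projected step, whereas you would need to \emph{upper}-bound $\norm{\gr_t}^2$. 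No inequality involving only $y_{t+1},x_t,\gr_t$ can do this, since in the constrained setting $\norm{\gr_t}$ may be large even at the optimum.

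The paper closes the gap by \emph{coupling} the $y$- and $z$-updates rather than bounding them separately. On the $z$-side it does not use the crude contraction $\norm{z_{t+1}-x^*}\le\norm{z'_{t+1}-x^*}$; instead it expands $\norm{z_{t+1}-x^*}^2-\norm{z_t-x^*}^2 = 2\ip{z_{t+1}-z_t,\,z_{t+1}-x^*}-\norm{z_{t+1}-z_t}^2$ and applies the first-order form of Proposition~\ref{fct:pyth} to the inner product, leaving a negative $-\,2\beta\norm{z_{t+1}-z_t}^2$ term. On the $y$-side it does not invoke Claim~\ref{clm:proj-magic} at all; instead it uses smoothness together with the \emph{optimality} of $y_{t+1}$ for $y\mapsto \ip{\gr_t,y-x_t}+\tfrac\beta2\norm{y-x_t}^2$ over $K$, compared at the specific feasible point $v=(1-\tau_t)y_t+\tau_t z_{t+1}$, for which $v-x_t=\tau_t(z_{t+1}-z_t)$. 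This yields $f(y_{t+1})-f(x_t)\le \tau_t\ip{\gr_t,z_{t+1}-z_t}+\tfrac{\tau_t^2\beta}{2}\norm{z_{t+1}-z_t}^2$, so the smoothness bound is expressed in the \emph{same} quantity $z_{t+1}-z_t$ that appears on the $z$-side, and the two now combine to a non-positive expression (this is the paper's Lemma~5.4). The essential idea you are missing is this choice of comparison point $v$; without it, the two projected steps decouple and the bare $\norm{\gr_t}^2$ from the $z$-expansion cannot be absorbed.
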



\textbf{The Potential:} 
\begin{gather}
  \boxed{ \Phi(t) = t(t+1) \cdot ( f(\balert{y_t} ) - f(x^*)) + 2\beta \cdot \| \balert{z_t}
  - x^*\|^2} \label{eq:8c}
\end{gather}

\textbf{Change in Potential:} 
\begin{align*}
 \Phi_{t+1} - \Phi_t &= \underbrace{(t+1)(t+2) ( f(y_{t+1} ) - f(x_t) )}
                       \;\; \underbrace{ - t(t+1) (
  f(y_{t} ) - f(x_t) ) + 2(t+1) (f(x_t) - f(x^*))} \\ &+ \underbrace{2\beta \cdot
  \left( \| z_{t+1}
  - x^*\|^2 - \| z_t
  - x^*\|^2 \right)}
\end{align*}

Using convexity on both differences in the second group gives 
  \begin{align*}
             &\leq - t(t+1) \ip{ \gr_t, y_t - x_t } + 2(t+1) \ip{ \gr_t, x_t - x^* } \\ &= (t+1) \cdot \ip{ \gr_t, - t(y_t -
                                      x_t) + 2(x_t - x^*) }.
  \end{align*}
  Now $(1-\tau_t)(y_t - x_t) = \tau_t (x_t - z_t)$, and  
  if $\tau_t = \frac{2}{t+2}$, then we get $t(y_t - x_t) = 2(x_t -
  z_t)$, and hence the above expression is
  \[ = 2(t+1) \ip{  \gr_t, z_t - x^*
    }. \]
	 The third term is
		\[ 2\beta (  \ip { z_{t+1} - z_{t},z_{t+1}-x^*} - \norm{z_{t+1}-z_t}^2.\] 
By the Pythagorean property, 
	 $\ip{z'_{t+1}-z_{t+1},z_{t+1} - x^*} \geq 0$ where $z'_{t+1} = z_t - \eta_t \gr_t$. 		
Setting $\eta_t  = (t+1)/\beta$ and multiplying by $2\beta$, adding to the term above, the third term is upper bounded by 
\[ -2(t+1) \ip{\gr_t,z_{t+1}-x^*}\]
Combine this with the second term above to cancel $x^*$, it suffices to show the following claim:
\begin{lemma}
  \[ \overbrace{(t+1)(t+2) ( f(y_{t+1} ) - f(x_t) )}  + \overbrace{ 2(t+1) \ip{  \gr_t,  z_t -z_{t+1}} -  \norm{z_{t+1}-z_t}^2 } \leq 0. \]
\end{lemma}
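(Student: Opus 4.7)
My strategy is to deduce the required bound on $f(y_{t+1}) - f(x_t)$ by exploiting a ``prox''-style characterization of the projected gradient step. Observe that $y_{t+1} = \Pi_K(x_t - \tfrac{1}{\beta}\gr_t)$ is precisely the minimizer over $y \in K$ of the quadratic
\[ g(y) \;:=\; \ip{\gr_t, y - x_t} + \tfrac{\beta}{2}\,\norm{y - x_t}^2, \]
as one sees by expanding $\tfrac{1}{2}\norm{y - (x_t - \tfrac{1}{\beta}\gr_t)}^2$ and dropping terms independent of $y$. Smoothness (\ref{eq:smooth}) applied at $(x_t, y_{t+1})$ gives $f(y_{t+1}) - f(x_t) \leq g(y_{t+1})$, so the minimizer property yields the \emph{master inequality}
\[ f(y_{t+1}) - f(x_t) \;\leq\; g(y) \qquad \text{for every } y \in K. \]
This is exactly the constrained analog of the unconstrained bound (\ref{eq:sm-step}) used in \S\ref{sec:nest-proof}.

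The critical step is then the choice of test point $y$. Motivated by wanting $z_{t+1} - z_t$ to appear on the right, and by the identity $x_t = (1-\tau_t) y_t + \tau_t z_t$ from (\ref{eq:5c}), I take
\[ y \;:=\; (1-\tau_t)\, y_t + \tau_t\, z_{t+1}. \]
Since $y_t, z_{t+1} \in K$ and $\tau_t = 2/(t+2) \in (0,1]$, the point $y$ lies in $K$. Subtracting the expression for $x_t$ causes the $y_t$ terms to cancel, leaving the clean identity $y - x_t = \tau_t(z_{t+1} - z_t)$. Substituting into the master inequality gives
\[ f(y_{t+1}) - f(x_t) \;\leq\; \tau_t \ip{\gr_t, z_{t+1} - z_t} + \tfrac{\tau_t^2 \beta}{2}\,\norm{z_{t+1} - z_t}^2. \]

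Finally, multiplying through by $(t+1)(t+2)$ and using $\tau_t = 2/(t+2)$, so that $(t+1)(t+2)\tau_t = 2(t+1)$ and $(t+1)(t+2)\tau_t^2 = 4(t+1)/(t+2) \leq 4$, gives
\[ (t+1)(t+2)\bigl(f(y_{t+1}) - f(x_t)\bigr) + 2(t+1)\ip{\gr_t, z_t - z_{t+1}} \;\leq\; 2\beta\,\norm{z_{t+1} - z_t}^2, \]
which is the lemma (the norm-squared coefficient in the statement should be $2\beta$, matching the earlier derivation of the third term of the potential). The only genuine obstacle in the plan is \emph{guessing} the right test point $y$; once one notices that $(1-\tau_t) y_t + \tau_t z_{t+1}$ differs from $x_t$ by exactly $\tau_t(z_{t+1} - z_t)$, smoothness and the minimizer characterization of the projection handle everything else automatically.
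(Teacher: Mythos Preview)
Your proposal is correct and follows essentially the same route as the paper: use smoothness to bound $f(y_{t+1})-f(x_t)$ by the quadratic $g(\cdot)$, invoke the minimizer property of $y_{t+1}$ to replace it by $g(v)$ for any $v\in K$, and choose the test point $v=(1-\tau_t)y_t+\tau_t z_{t+1}$ so that $v-x_t=\tau_t(z_{t+1}-z_t)$. Your observation about the missing $2\beta$ coefficient on the norm-squared term is also correct and matches what the paper's own computation actually yields.
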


\begin{proof}
  By smoothness,
  \[ f(y_{t+1} ) - f(x_t) \leq \ip{ \gr_t, y_{t+1} - x_t } +
    \frac{\beta}{2} \| y_{t+1} - x_t \|^2. \]
As  $\min_{y \in K} \{ \ip{ \gr_t, y - x_t } +
  \frac{\beta}{2} \| y - x_t \|^2 \}$ in minimized by the definition of $y_{t+1}$, we get that
for any $v\in K$,	the RHS is
  \[ \leq \ip{ \gr_t, v - x_t } + \frac{\beta}{2} \| v - x_t \|^2. \]
  Define $v := (1 - \tau_t)y_t + \tau_t z_{t+1} \in K$, so
  $v - x_t = \tau_t(z_{t+1} - z_t)$. Substituting, we get
  \begin{align*}
   f(y_{t+1} ) - f(x_t) &\leq \tau_t \ip{ \gr_t, z_{t+1} - z_t } +
                          \frac{\tau_t^2 \beta}{2} \| z_{t+1} - z_t \|^2
  \end{align*}
  Setting $\tau_t = \frac{2}{t+2}$, the claim follows.
\end{proof}

\subsection{The Extension to Arbitrary Norms}
\label{sec:nesterov-norms}

Given an arbitrary norm $\norm{\cdot}$, the update rules now use the
gradient descent update~(\ref{eq:gd-smooth-update2}) for smooth
functions for the $y$ variables, and the mirror descent update rules
(\ref{eq:md}) for the $z$ variables:
\begin{empheq}[box=\widefbox]{align}
  y_{t+1} &\gets \arg\min_{y} \bigg\{ \frac{\beta}{2} \norm{ y -
      x_t }^2 + \ip{ \gr f(x_t), y - x_t }
    \bigg\} \notag \\
    z_{t+1} &\gets \arg\min_z \bigg\{ \ip{  \eta_t \gr f(x_t), z} +
    \hdiv{z}{z_t} \bigg\} \label{eq:nest-gennorm}\\ 
  x_{t+1} &\gets (1 - \tau_{t+1}) y_{t+1} + \tau_{t+1} z_{t+1}. \notag  
\end{empheq}
Given the discussion in the preceding sections, the update rules are the
natural ones: the first is the update~(\ref{eq:gd-smooth-update2}) for
smooth functions, and the second is the usual mirror descent update
rule~(\ref{eq:md-prox}) for the strongly-convex function $h$. The step
size is now set to $\eta_t = \frac{(t+1) \alpha_h}{2\beta}$, and the
potential function becomes:
\begin{gather}
  \boxed{ \Phi(t) = t(t+1)\cdot ( f(y_t ) - f(x^*)) +
    \frac{4\beta}{\alpha_h} \cdot \hdiv{x^*}{z_t} }, \label{eq:8md}
\end{gather}
which on substituting $\hdiv{x^*}{z_t} = \frac12 \norm{ x^* - z_t }^2$
and $\alpha_h = 1$ gives~(\ref{eq:8}).

We already have all the pieces to bound the change in potential. Use the
mirror descent analysis~(\ref{eq:pot-change-md}) to get
\[ \hdiv{x^*}{z_{t+1}} - \hdiv{x^*}{z_{t}} \leq
  \frac{\eta_t^2}{2\alpha_h}\; \norm{ \gr_t }_*^2 + \eta_t \langle
  \gr_t, x^*-x_{t}\rangle.
\]
which replaces~(\ref{eq:nest-gd}). Infer $f(y_{t+1}) \leq f(x_t) -
\frac{1}{2\beta} \norm{\gr_t}_*^2$ from~(\ref{eq:sm-step2}) in the
smooth case. Substitute these into the analysis from
\S\ref{sec:nest-proof} (with minor changes for the $\alpha_h$ term) to
get the following theorem:

\begin{theorem}[Accelerated GD: General Norms]
  \label{thm:nest-agm2-norms}
  Given a $\beta$-smooth function $f$ with respect to norm
  $\norm{\cdot}$, the update rules~(\ref{eq:nest-gennorm}) ensure
  \begin{gather*}
    f(y_t) - f(x^*) \leq \frac{4\beta}{\alpha_h}\cdot \frac{ \hdiv{x^*}{z_0} - \hdiv{x^*}{z_t}}{t(t+1)}.
  \end{gather*}
\end{theorem}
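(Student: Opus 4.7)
The plan is to mirror the potential-function argument from \S\ref{sec:nest-proof} essentially verbatim, replacing squared Euclidean distances by the Bregman divergence $\hdiv{\cdot}{\cdot}$ and the Euclidean norm in the gradient terms by the dual norm $\norm{\cdot}_*$. Concretely, I would adopt the potential
\[ \Phi_t = t(t+1)\cdot (f(y_t) - f(x^*)) + \frac{4\beta}{\alpha_h}\cdot \hdiv{x^*}{z_t} \]
and aim to show that $\Phi_{t+1} \leq \Phi_t$. Telescoping then gives $t(t+1)(f(y_t)-f(x^*)) \leq \Phi_0 - \tfrac{4\beta}{\alpha_h}\hdiv{x^*}{z_t}$, which is exactly the desired bound once we note $\Phi_0 = \tfrac{4\beta}{\alpha_h}\hdiv{x^*}{z_0}$ (since the $t(t+1)$-prefactor vanishes at $t=0$).

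For the potential change, I would first expand
\[ \Phi_{t+1} - \Phi_t = t(t+1)(f(y_{t+1}) - f(y_t)) + 2(t+1)(f(y_{t+1}) - f(x^*)) + \tfrac{4\beta}{\alpha_h}\bigl(\hdiv{x^*}{z_{t+1}} - \hdiv{x^*}{z_t}\bigr). \]
Then I would apply the smooth-descent inequality~(\ref{eq:sm-step2}), which gives $f(y_{t+1}) \leq f(x_t) - \tfrac{1}{2\beta}\norm{\gr_t}_*^2$, and the mirror-descent bound~(\ref{eq:pot-change-md}), which gives $\hdiv{x^*}{z_{t+1}} - \hdiv{x^*}{z_t} \leq \tfrac{\eta_t^2}{2\alpha_h}\norm{\gr_t}_*^2 + \eta_t\ip{\gr_t, x^* - z_t}$ (the hint in the problem statement confirms these are the right substitutes for equations (\ref{eq:sm-step}) and (\ref{eq:nest-gd}) from the Euclidean proof). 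This converts the differences $f(y_{t+1})-f(y_t)$ and $f(y_{t+1})-f(x^*)$, after applying convexity to replace $f(y_t)-f(x_t) \geq \ip{\gr_t, y_t - x_t}$ and $f(x_t)-f(x^*) \geq \ip{\gr_t, x_t - x^*}$, into a clean linear combination of $\norm{\gr_t}_*^2$ terms plus inner products with $\gr_t$.

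The choice $\eta_t = \tfrac{(t+1)\alpha_h}{2\beta}$ is calibrated so that the $\norm{\gr_t}_*^2$ coefficients from the smoothness term and the mirror-descent quadratic term cancel (or are dominated): the smoothness contribution is $-\tfrac{t(t+1)}{2\beta}\norm{\gr_t}_*^2$, and the mirror-descent contribution scales like $\tfrac{4\beta}{\alpha_h}\cdot \tfrac{\eta_t^2}{2\alpha_h}\norm{\gr_t}_*^2 = \tfrac{(t+1)^2}{2\beta}\norm{\gr_t}_*^2$, leaving a nonpositive residual as in the Euclidean case. Meanwhile the linear inner-product terms collect to $(t+1)\ip{\gr_t,\, (t+2)x_t - t y_t - 2 z_t}$, which vanishes by the coupling rule $x_t = (1-\tau_t)y_t + \tau_t z_t$ with $\tau_t = \tfrac{2}{t+2}$.

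The main obstacle, or really the main bookkeeping care, is getting the constants right: the factor $\tfrac{4\beta}{\alpha_h}$ in the potential must be chosen precisely so that the mirror-descent inner-product term $\tfrac{4\beta}{\alpha_h}\cdot \eta_t \ip{\gr_t, x^*-z_t}$ matches the $2(t+1)\ip{\gr_t, x^*-z_t}$ contribution needed to pair with $2(t+1)\ip{\gr_t, x_t-x^*}$ from convexity and telescope cleanly with the $y_t$-side terms via the coupling identity. Once this accounting is set up correctly, the argument proceeds exactly as in \S\ref{sec:nest-proof} with $\alpha_h$ appearing only as an overall scaling on the divergence term.
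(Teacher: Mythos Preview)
Your proposal is correct and follows the paper's argument essentially verbatim: same potential~(\ref{eq:8md}), same two substitutions~(\ref{eq:sm-step2}) and~(\ref{eq:pot-change-md}), same cancellation via $\tau_t = \tfrac{2}{t+2}$. One small bookkeeping slip: since $f(y_{t+1})$ appears in \emph{both} the $t(t+1)$ and the $2(t+1)$ terms of your expansion, the smoothness contribution to the $\norm{\gr_t}_*^2$ coefficient is $-\tfrac{(t+1)(t+2)}{2\beta}$ rather than $-\tfrac{t(t+1)}{2\beta}$, which is exactly what is needed to dominate the $\tfrac{(t+1)^2}{2\beta}$ from the mirror-descent side and leave the nonpositive residual you claim.
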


\subsection{Strongly Convex with Acceleration}

Now consider the case when the function $f$ is well-conditioned with
condition number $\kappa = \beta/\alpha$, and describe the algorithm of Nesterov
with convergence rate $\exp(-t/\sqrt{\kappa})$ \cite{Nest-book}. Again, this 
is the best possible for first order methods.

Before describing the algorithm and its analysis, it is instructive to see how 
the result for the smoothed case from section \ref{sec:nest-proof} already gives (in principle)
such a result for the well-conditioned case. 

\medskip\textbf{Reduction from Smooth Case:} 
Theorems \ref{thm:nest-agm2} and \ref{thm:nest-agm2-cond} for the (constrained) smoothed case 
give that
\[ f(y_t) - f(x^*) \leq  2 \beta \frac{\|x_0 - x^*|^2}{t(t+1)}\]
Together with strong convexity,
\[f(y_t) -f(x^*) \geq \frac{\alpha}{2} \|y_t - x^*\|^2, \]
this gives  $\|y_t - x^*\|^2 \leq 4 \kappa  \|x_0-x^*\|^2/t(t+1)$. 

So in $t = 4 \sqrt{\kappa}$ steps, the distance $\|y_t-x^*\|$ is at most half 
the initial distance $\|x_0-x^*\|$ from the optimum.
Starting the algorithm again with $y_t$ as the initial point, and iterating this 
process thus gives an overall algorithm, that in $t$ steps has error at most 
$2^{-t/4\sqrt{\kappa}} \|x-x_0\|$.

Restarting the algorithm after every few steps is not ideal, and 
we now describe Nesterov's algorithm with this improved convergence rate.  
For simplicity
only consider the unconstrained case.

\medskip\textbf{The Update Rules:} 
 We now use the following
updates (which look very much like the~\texttt{AGM1} updates):
\begin{empheq}[box=\widefbox]{align}
  y_{t+1} &\gets x_t - \frac{1}{\beta} \gr f(x_t) \label{eq:13} \\
  x_{t+1} &\gets \bigg(1 + \frac{\sqrt{\kappa} -1 }{\sqrt{\kappa} + 1}\bigg) y_{t+1} -
  \frac{\sqrt{\kappa} -1 }{\sqrt{\kappa} + 1} y_t. \label{eq:14}
\end{empheq}
For the analysis, it will be convenient to define
$\tau = \frac{1}{\sqrt{\kappa} +1}$ and set
\begin{gather}
 \textstyle z_{t+1} := \frac{1}{\tau}\; x_{t+1}  - \frac{1-\tau}{\tau} \; y_{t+1}. \label{eq:15}
\end{gather}
We now show that that the error after $t$ steps is
\begin{gather}
  f(y_t)- f(x^*) \leq (1+\gamma)^{-t} \left(\frac{\alpha+ \beta}{2}\;
    \|x_0-x^*\|^2 \right), \label{eq:nest-wcond}
\end{gather}
where $\gamma = \frac{1}{\sqrt{\kappa}-1}$ (as in \S\ref{sec:well-cond}, for $\kappa=1$ the algorithm reaches optimum in a single step and $y_1=x^*$, and hence we assume that $\kappa>1$). This improves on the error
of $(1 + 1/\kappa)^{-t} \frac\beta2 \norm{x_0 - x^*}^2$ we get from
\S\ref{sec:well-cond}.

\medskip\textbf{The Potential:} 
Consider the potential
\[ \boxed{ \Phi(t) = (1+\gamma)^{t} \left( f(y_t) - f(x^*) +
      \frac{\alpha}{2} \|z_t - x^*\|^2 \right).}  \] Observe that
$\Phi_0 = f(y_0) -f(x^*) + \frac{\alpha}{2} \|z_0 - x^*\|^2$.  As
$x_0=y_0=z_0$, and by $\beta$-smoothness of $f$, 
\[ \Phi_0 \leq \frac{\alpha+ \beta}{2} \; \|x_0-x^*\|^2. \] 

\medskip\textbf{Change in Potential:} To
show the error bound~(\ref{eq:nest-wcond}), it suffices to show
that $\Delta \Phi(t) = \Phi(t+1)-\Phi(t) \leq 0$ for each $t$. This is
equivalent to showing
\[ (1+\gamma) ( f(y_{t+1}) - f(x^*)) - ( f(y_t) - f(x^*)) +
  \frac{\alpha}{2} \left( (1+\gamma) \|z_{t+1} - x^*\|^2 -\|z_t -
    x^*\|^2 \right) \leq 0 \] We first bound the terms involving $f$ in
the most obvious way. As above, we use $\gr_t$ as short-hand for
$\gr f(x_t)$. By $\beta$-smoothness and the update rule,
again $ f(y_{t+1}) \leq f(x_t) - \frac{1}{2\beta}\norm{\gr_t}^2$.
So,
\begin{align}
  (1+\gamma) ( f(y_{t+1}) 
  & - f(x^*)) -  ( f(y_t) - f(x^*)) \notag \\
  & \leq   f(x_t) - f(y_t)  + \gamma (f(x_t) - f(x^*))  - (1+\gamma)
    \frac1{2\beta}\norm{\gr_t}^2 \notag \\
  & \leq \ip{\gr_t,x_t-y_t}  + \gamma \bigg( \ip{\gr_t,x_t-x^*} -
    \frac{\alpha}{2} \|x_t - x^*\|^2 \bigg) - 
    \frac{1+\gamma}{2\beta}\norm{\gr_t}^2,  \label{st1} 
\end{align}
where the last inequality used convexity and strong convexity
respectively. 

We now want to remove references to $y_t$. By definition~(\ref{eq:15}),
$z_t = (\frac1\tau - 1)(x_t - y_t) + x_t = \sqrt{\kappa} (x_t - y_t) +
x_t$, so we infer
$\gamma(z_t - x^*) = \sqrt{\kappa}\gamma (x_t - y_t) + \gamma (x_t -
x^*)$. Using $\sqrt{\kappa}\gamma = 1+\gamma$, simple algebra gives
$(x_t - y_t) + \gamma (x_t - x^*) = \frac{1}{1+\gamma} \big(\gamma(z_t -
x^*) + \gamma^2 (x_t - x^*) \big) $.

For brevity we use $X_t := x_t - x^*$, $Z_t = z_t - x^*$, and substitute
the above expression into~(\ref{st1}) to get 
\begin{gather}
  \frac{1}{1+\gamma} \ip{\gr_t, \gamma Z_t + \gamma^2 X_t} -
  \frac{\alpha\gamma}{2} \|X_t\|^2  -
  \frac{1+\gamma}{2\beta}\norm{\gr_t}^2. \label{st1b} 
\end{gather}
Now, let us upper bound the terms in $\Delta \Phi(t)$ involving $z$. Conveniently, we can relate
$z_{t+1}$ and $z_t$ using a simple calculation that we defer for the moment.
\begin{claim}
  \label{clm:mystery}
$z_{t+1} = (1-\frac{1}{\sqrt{\kappa}}) z_t + \frac{1}{\sqrt{\kappa}}
x_t - \frac{1}{\alpha \sqrt{\kappa}} \gr_t$ and so 
$z_{t+1} - x^* = \frac{1}{1+\gamma} Z_t + \frac{\gamma}{1+\gamma} X_t  - \frac{\gamma}{\alpha(1+\gamma)} \gr_t$.
\end{claim}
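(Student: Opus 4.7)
\textbf{Proof Plan for Claim~\ref{clm:mystery}.}
The plan is to do direct algebraic substitution: start from the definition \eqref{eq:15} of $z_{t+1}$, expand $x_{t+1}$ using the update rule~\eqref{eq:14}, eliminate $y_{t+1}$ via~\eqref{eq:13}, and finally eliminate $y_t$ using the relation $x_t = (1-\tau) y_t + \tau z_t$ (which is just~\eqref{eq:15} at time $t$, rewritten). The resulting linear combination of $z_t$, $x_t$, and $\gr_t$ will match the claimed formula once we use the identities linking $\tau$, $\sqrt{\kappa}$, $\alpha$, and $\beta$. The second assertion then follows by subtracting $x^*$ from both sides.

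Concretely, first I would set $\mu := \frac{\sqrt\kappa-1}{\sqrt\kappa+1}$ so that~\eqref{eq:14} reads $x_{t+1} = (1+\mu) y_{t+1} - \mu y_t$, and substitute into~\eqref{eq:15}, giving
$z_{t+1} = \frac{(1+\mu)-(1-\tau)}{\tau}\, y_{t+1} - \frac{\mu}{\tau}\, y_t = \frac{\mu+\tau}{\tau}\, y_{t+1} - \frac{\mu}{\tau}\, y_t.$
The key numerical checks are $\mu+\tau = 1-\tau$ and $\mu = 1-2\tau$, which follow immediately from $\tau = \frac{1}{\sqrt\kappa+1}$. Hence $z_{t+1} = \frac{1-\tau}{\tau}\, y_{t+1} - \frac{1-2\tau}{\tau}\, y_t$.

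Next I would plug in $y_{t+1} = x_t - \frac{1}{\beta}\gr_t$ and replace $y_t$ by $\frac{1}{1-\tau}(x_t - \tau z_t)$, then collect the coefficients of $z_t$, $x_t$, and $\gr_t$. The coefficient of $z_t$ comes out to $\frac{1-2\tau}{1-\tau}$, the coefficient of $x_t$ to $\frac{1-\tau}{\tau} - \frac{1-2\tau}{1-\tau} = \frac{(1-\tau)^2 - \tau(1-2\tau)}{\tau(1-\tau)} = \frac{\tau}{1-\tau}$ (after expanding and simplifying), and the coefficient of $\gr_t$ to $-\frac{1-\tau}{\tau\beta}$. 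Using the two key identities $\sqrt\kappa = \frac{1-\tau}{\tau}$ (equivalently $\frac{1}{\sqrt\kappa} = \frac{\tau}{1-\tau}$ and $1 - \frac{1}{\sqrt\kappa} = \frac{1-2\tau}{1-\tau}$) and $\beta = \kappa\alpha$, these simplify to exactly $\bigl(1-\tfrac{1}{\sqrt\kappa}\bigr)$, $\tfrac{1}{\sqrt\kappa}$, and $-\tfrac{1}{\alpha\sqrt\kappa}$ respectively, proving the first equality.

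For the ``and so'' part, subtracting $x^*$ from both sides and noting that the coefficients of $z_t$ and $x_t$ sum to $1$ gives $z_{t+1} - x^* = (1-\tfrac{1}{\sqrt\kappa}) Z_t + \tfrac{1}{\sqrt\kappa} X_t - \tfrac{1}{\alpha\sqrt\kappa}\gr_t$. The translation to the $\gamma$-parameterization is then a one-line check: from $\gamma = \frac{1}{\sqrt\kappa-1}$ one gets $1+\gamma = \frac{\sqrt\kappa}{\sqrt\kappa-1}$, hence $\frac{1}{1+\gamma} = 1 - \tfrac{1}{\sqrt\kappa}$ and $\frac{\gamma}{1+\gamma} = \tfrac{1}{\sqrt\kappa}$. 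There is no real obstacle here; the main thing to be careful about is keeping the bookkeeping of the $\tau$-$\sqrt\kappa$-$\gamma$ conversions straight so that the cancellations are transparent, and in particular verifying that the gradient coefficient requires exactly the relation $\beta = \kappa\alpha$, which is where the well-conditioning assumption enters.
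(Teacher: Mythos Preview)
Your proposal is correct and follows exactly the same route as the paper: expand $z_{t+1}$ via~\eqref{eq:15} and~\eqref{eq:14}, identify $\mu=1-2\tau$ and $\mu+\tau=1-\tau$, eliminate $y_{t+1}$ and $y_t$ in favor of $x_t,z_t,\gr_t$, and then convert $\tau$ to $\sqrt\kappa$ and $\gamma$. One small slip to fix when you write it up: the $x_t$-coefficient should read $\frac{1-\tau}{\tau}-\frac{1-2\tau}{\tau(1-\tau)}=\frac{(1-\tau)^2-(1-2\tau)}{\tau(1-\tau)}=\frac{\tau^2}{\tau(1-\tau)}=\frac{\tau}{1-\tau}$ (you dropped a $\tau$ from the second denominator, so your displayed numerator $(1-\tau)^2-\tau(1-2\tau)$ does not actually simplify to what you want).
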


Now use Claim~\ref{clm:mystery} and expand using
$\norm{a+b+c}^2 = \norm{a}^2 + \norm{b}^2 + \norm{c}^2 + 2\ip{a,b} +
2\ip{b,c} + 2\ip{a,c}$:
\begin{align}
  &(1+\gamma)\norm{z_{t+1} - x^*}^2 - \norm{z_t - x^*}^2 \notag\\
&= \frac{1}{1+\gamma} \left( \norm{Z_t}^2 + \gamma^2\norm{X_t}^2 +
  \frac{\gamma^2}{\alpha^2}\norm{\gr_t}^2 +
  2{\gamma}\ip{Z_t, X_t} -
  \frac{2\gamma}{\alpha}\ip{ \gr_t, Z_t} -
  \frac{2\gamma^2}{\alpha}\ip{ \gr_t, X_t} \right) - \norm{Z_t}^2. \label{eq:st2b}
\end{align}
Now sum~(\ref{st1b}) and $\alpha/2$ times~(\ref{eq:st2b}). The terms
involving $\norm{\gr_t}^2$ cancel since $\frac{1+\gamma}{2\beta} =
\frac{\alpha\gamma^2}{2\alpha^2(1+\gamma)}$ (by the definition of $\gamma$). Moreover, the inner-product
terms involving $\gr_t$ also cancel. Hence the potential change is at
most
\begin{align}
  \Delta \Phi(t) &\leq \frac{\alpha\gamma}{2} \norm{X_t}^2 \bigg(-1 +
  \frac{\gamma}{1+\gamma}\bigg) + \frac{\alpha}{2} \norm{Z_t}^2 \bigg(
  \frac{1}{1+\gamma} - 1 \bigg) 
   +
  \frac{\alpha\gamma}{1+\gamma} \ip{Z_t, X_t} \notag \\
  &= - \frac{\alpha\gamma}{2(1+\gamma)} \left( \norm{X_t}^2 + \norm{Z_t}^2 -
    2\ip{ Z_t, X_t } \right) \\
  &= - \frac{\alpha\gamma}{2(1+\gamma)} \norm{Z_t -
      X_t}^2 \leq 0.
\end{align}
Hence the potential does not increase, as claimed. It only remains to
prove Claim~\ref{clm:mystery}.

\begin{proof}[Proof of Claim~\ref{clm:mystery}]
 The expression of $x_{t+1}$ from~(\ref{eq:14}) can be written as $(2-2\tau) y_{t+1} - (1-2\tau) y_t$. Plugging into the expression for
  $z_{t+1}$ from~(\ref{eq:15}) gives
  \begin{align*}
    z_{t+1} &= \frac{1}{\tau} \left( (2-2\tau) y_{t+1} - (1-2\tau)  y_t - (1-\tau)
              y_{t+1} \right) \\ &= 
							\frac{1}{\tau} \left( (1-\tau) y_{t+1} - (1-2\tau)  y_t \right).\\
    \intertext{Using the update rule~(\ref{eq:13}) for $y_{t+1}$, and the relation $x_t = (1-\tau)y_t + \tau
    z_t$ to eliminate $y_t$}
            &= \frac{1}{\tau} \left( (1-\tau) \bigg(x_t - \frac{1}{\beta} \gr_t\bigg) - \frac{ (1-2\tau)}{1-\tau} (x_t - \tau z_t) \right) \\
            &= \frac{1-2\tau}{1-\tau} z_t  + \frac{\tau}{1-\tau} x_t   - \frac{1-\tau}{\tau\beta} \gr_t.
  \end{align*}
  Using $\tau = 1/(\sqrt{\kappa}+1)$ and $\beta = \kappa \alpha$ now gives  the claim.
\end{proof}

\subsection*{Acknowledgments} 

We thank S\'ebastien Bubeck, Daniel Dadush, Jelena Diakonikolas, Nick Harvey, Elad
Hazan, Greg Koumoutsos, Raghu Meka, Aryan Mokhtari, Marco Molinaro,
Thomas Rothvo\ss, and Kunal Talwar for their helpful comments. We also
thank the referees for their detailed comments that helped improve the
presentation substantially.

{\small 
\bibliography{GD-notes}
\bibliographystyle{amsalpha}
}

\end{document}